\documentclass[letterpaper]{article} 
\usepackage[preprint]{aaai2027}  
\usepackage[hyphens]{url}  
\usepackage{graphicx} 
\usepackage{natbib}  
\usepackage{caption} 
\usepackage{enumitem}
\usepackage{amsmath}
\usepackage{amssymb}
\usepackage{amsthm}
\theoremstyle{plain}
\newtheorem{theorem}{Theorem}

\newtheorem{lemma}{Lemma}
\theoremstyle{definition}
\newtheorem{definition}{Definition}

\usepackage{algorithm}
\usepackage{algorithmic}

\usepackage{newfloat}
\usepackage{listings}
\DeclareCaptionStyle{ruled}{labelfont=normalfont,labelsep=colon,strut=off} 
\floatstyle{ruled}
\newfloat{listing}{tb}{lst}{}
\floatname{listing}{Listing}

\usepackage{booktabs}

\DeclareMathOperator{\tr}{tr}

\title{A Probabilistic Circuit-Induced Pseudo-Metric for Out-of-Distribution Detection}
\author{
    Bhumika K,
    Vidhya S,
    Dr Narayanan C Krishnan
}
\affiliations{
    Mehta Family School of Data Science and Artificial Intelligence\\
    Indian Institute of Technology Palakkad\\
    Palakkad, Kerala, India\\
    
    \{142414001,142414002\}@smail.iitpkd.ac.in, ckn@iitpkd.ac.in
}

\begin{document}

\maketitle

\begin{abstract}
Probabilistic Circuits (PCs) are tractable generative models whose internal nodes encode a hierarchy of probabilistic summaries over different variable scopes. Existing PC-based out-of-distribution (OOD) detection methods ignore this hierarchy, reducing the entire circuit to the scalar likelihood (or its uncertainty) computed at the root. We introduce Hierarchical Likelihood Vector (HLV), a representation whose entries are the likelihoods associated with selected PC nodes and define the Hierarchical Likelihood Distance (HLD), a PC-induced pseudo-metric that compares the probability distributions through the expectations of their HLVs. We show that HLD is an integral probability metric over a function class naturally induced by the PC and develop a principled goodness-of-fit hypothesis test for unsupervised OOD detection. Unlike existing approaches, the trained PC alone serves as the representation of the in-distribution: no held-out in-distribution data are required at deployment. We further show that the quantities required by the hypothesis test can be computed exactly, directly from the trained circuit, yielding an approximate analytic decision threshold. Experiments on tabular and MNIST datasets demonstrate that exploiting the hierarchical probabilistic summaries encoded through the PC improve OOD detection over root-likelihood, uncertainty-, typicality- and kernel-based baselines, while naturally localizing distribution shifts to the PC nodes responsible for the shift.

\end{abstract}

\section{Introduction}
Out-of-distribution (OOD) detection seeks to determine whether data encountered at deployment remain consistent with the distribution represented by a learned model. In the unsupervised setting, generative models provide a natural basis for this task because they explicitly model the in-distribution (ID) and assign probabilistic scores to observations. Probabilistic Circuits (PCs) \cite{poo11,rah14,kis14} are particularly attractive in this context \cite{mau17,cho21,cor20} because their structural properties support exact and tractable computation of likelihoods, marginals, conditionals and other probabilistic queries \cite{PC20,ver21,peh15,she16,kho19}. Moreover, a PC represents more than the complete joint distribution: its internal nodes encode a hierarchy of tractable probability distributions over different variable scopes \cite{peh16,zha15}, providing multiple views of the learned distribution. 


In this work, we consider distributional OOD detection, where the objective is to determine whether a collection of observations is statistically consistent with the distribution represented by a trained PC. Generic distribution-comparison methods, such as Maximum Mean Discrepancy (MMD) \cite{gret12}, require reference samples from both the ID and test distributions and therefore operate as two-sample tests. Scalar likelihood \cite{ven23,nal19} and typicality-based \cite{nal19d} approaches may avoid an explicit two-sample statistic, but generally still require an ID reference or calibration sample, obtained either from held-out data or by sampling from the learned model. Moreover, relying only on the likelihood evaluated at the root maps every observation to a single scalar and therefore compares distributions only through the behavior of this statistic, while discarding the richer probabilistic information available throughout the circuit. This raises a central question: \textit{can the hierarchical probabilistic information encoded within a trained PC be used to construct a statistically calibrated distribution-level OOD detector without relying on held-out ID data?}

To address this question, we represent each observation using the likelihoods evaluated at a selected set of PC nodes, forming a Hierarchical Likelihood Vector (HLV) that preserves probabilistic information across multiple variable scopes. We then represent a distribution by the expectation of its HLV and define the Hierarchical Likelihood Distance (HLD) as the discrepancy between the expected HLVs of the learned and test distributions. Based on the resulting HLD test statistic, we construct a goodness-of-fit test for distributional OOD detection. Crucially, we show that the ID population mean and covariance quantities required by the test can be computed exactly and tractably from the trained PC, enabling an approximate closed-form decision threshold without requiring held-out ID data. Overall, our contributions are summarized as follows.

\begin{itemize}
    \item We introduce the Hierarchical Likelihood Vector (HLV), which represents an observation through the likelihoods evaluated at selected nodes of a PC and define the Hierarchical Likelihood Distance (HLD), a PC-induced pseudo-metric between distributions through the expectations of their HLVs. We further establish that HLD is a PC-induced Integral Probability Metric.

    \item We develop an HLD-based goodness-of-fit test for distributional OOD detection and show that the ID population mean and covariance quantities required by the test can be computed exactly and tractably from a structured-decomposable PC. This enables an analytic decision threshold without requiring held-out ID data.

    \item We evaluate the proposed framework on tabular and MNIST datasets, demonstrating improved distributional OOD detection over root-likelihood, uncertainty-, typicality- and kernel-based baselines. We further study the contribution of different node types, robustness to PC architecture, capacity and illustrate the HLV's ability to localize the variables associated with the shift.
\end{itemize}

\section{Background}

\subsection{Probabilistic Circuits}

A probabilistic circuit (PC) \(\mathcal C\) is a rooted directed acyclic graph that represents a joint probability distribution over a set of random variables \(\mathcal X=\{X_1,\ldots,X_d\}\). Each node \(n\) has a scope \(S_n\subseteq\mathcal X\), consisting of the variables appearing in the subcircuit rooted at \(n\) and represents a function \(p_n(\mathbf x_{S_n})\). A PC includes three node types: \textit{Input} nodes represent univariate distributions, \textit{sum} nodes represent convex mixtures of their children and \textit{product} nodes represent products of their children. Formally, each node \( n \) defines a distribution \( p_n \) recursively as:
\[
p_n(\mathbf x_{S_n}) = 
\begin{cases}
    f_n(\mathbf x_{S_n}), & \text{if } n \text{ is an input node} \\
    \prod_{c\in ch(n)} p_{c}(\mathbf x_{S_c}), & \text{if } n \text{ is a product node} \\
    \sum_{c\in ch(n)} \theta_{nc} \cdot p_c(\mathbf x_{S_c}), & \text{if } n \text{ is a sum node}
\end{cases}
\]
where \( f_n \) denotes a univariate input distribution, \( ch(n) \) represents the children of \( n \) and \( \theta_{nc} \) is the weight associated with the edge \( (n,c) \) and satisfy \( \theta_{nc} \in [0,1] \) for all \( c \in ch(n) \), \(
\sum_{c\in ch(n)} \theta_{nc}=1.\) A PC is evaluated bottom-up and the output of the root \(n_r \) gives the represented distribution 
\( p_C(\mathbf x)=p_{n_r}(\mathbf x). \)

A PC is \textit{smooth} if the children of every sum node have identical scopes and \textit{decomposable} if the children of every product node have pairwise disjoint scopes. It is \textit{structured-decomposable} if its product decompositions conform to a common v-tree over \(\mathcal X\). Under these conditions, every node represents a valid probability distribution over its scope, while likelihood evaluation and marginalization remain tractable. Moreover, products of compatible structured-decomposable PCs can be constructed tractably \cite{ver21}, a property that we later use to compute the moments of the proposed representation directly from the circuit.

\subsection{Related Work}
Likelihood-based scoring is a central paradigm for unsupervised OOD detection with generative models and has been studied extensively for variational autoencoders, normalizing flows and related deep generative architectures \cite{an15,nal19,kir20}. Its reliability, however, is limited by the likelihood paradox: OOD samples may receive higher likelihood than ID data \cite{nal19}. This observation has motivated methods that modify, calibrate or reinterpret likelihood-based evidence, including likelihood ratios \cite{ren19}, WAIC-based detection \cite{cho19}, input-complexity corrections \cite{ser20}, typicality-based detection \cite{nal19d} and likelihood regret \cite{xia20}. Despite their differences, these approaches base their decisions on one or a small number of scalar statistics derived from root-level likelihood evaluations and do not exploit the richer information encoded within the generative model.

Classical goodness-of-fit and two-sample tests based on empirical distribution functions are difficult to extend effectively to high-dimensional data, where their statistical power can deteriorate \cite{nal19d}. Kernel methods provide more flexible alternatives. Kernelized Stein discrepancy tests whether samples are consistent with a model using its score function \cite{chw16,liu16}, whereas MMD compares two distributions through their kernel mean embeddings \cite{gret12}. The former requires access to a score function, while the latter requires ID reference samples; both also depend on the choice of kernel. 

OOD detection with PCs has received comparatively limited attention. Tractable Dropout Inference (TDI) exploits the tractable structure of PCs to obtain predictive uncertainty estimates in a single forward pass \cite{ven23}, but is developed primarily for supervised prediction rather than unsupervised distributional testing. More closely related, Membership Circuits transform a trained PC into a per-sample membership test by combining node-level \(p\)-values bottom-up into a scalar value at the root \cite{wit26}. In contrast, our setting concerns batch-level OOD detection and the proposed representation retains the node-wise likelihood responses rather than collapsing the circuit hierarchy into a single scalar statistic.

\section{Methodology}

\subsection{Problem Formulation}
Let \(\mathcal C\) be a trained probabilistic circuit representing an in-distribution \(P\) over \(\mathcal X\). We restrict attention to discrete random variables with Bernoulli or categorical input distributions. As a result, the probability mass function represented at every node of \(\mathcal C\) takes values in \([0,1]\). This boundedness is used in the construction and statistical analysis of the proposed representation; extending the framework to continuous random variables is left as future work. 

At deployment, we observe a test batch \(
\mathcal D_Q=\{\mathbf x^{(t)}\}_{t=1}^{T}, 
\mathbf x^{(t)}\overset{\mathrm{i.i.d.}}{\sim} Q,
\) drawn from an unknown distribution \(Q\). We consider distributional OOD detection and test
\[
H_0:P=Q
\qquad\text{against}\qquad
H_1:P\neq Q.
\]
Rejecting \(H_0\) implies \(D_Q\) is distributionally OOD with respect to the distribution represented by \(\mathcal C\).

We assume that the trained circuit is the sole representation of the ID available at deployment; no held-out samples from \(P\) are available for constructing or calibrating the test.

\subsection{Hierarchical Likelihood Vector}
\begin{definition} [Hierarchical Likelihood Vector] Let \(\mathcal N={n_1,\ldots,n_m}\) be a set of selected nodes of the smooth and decomposable PC \(\mathcal C\). Each node \(n\in\mathcal N\) has scope \(S_n\) and represents a probability mass function \(p_n(\mathbf x_{S_n})\). For an observation \(\mathbf x\), define the node likelihood \(\ell_n(\mathbf x)=p_n(\mathbf x_{S_n}).\) The Hierarchical Likelihood Vector (HLV) is defined as:
\[\textbf{L}_\mathcal{C}(\mathbf x)=(\ell_{n_1}(\mathbf x),\ell_{n_2}(\mathbf x),\dots,\ell_{n_m}(\mathbf x))\in [0, 1]^{m}\]
 When the underlying PC is clear from the context, we simply write $\textbf{L}(\mathbf x)$.

\end{definition}
The HLV is constructed through a single bottom-up evaluation of the PC. Each coordinate in HLV measures the compatibility of the observation \(\mathbf x_{S_n}\) with the distribution represented by the corresponding subcircuit: input node provides variable-level summaries, while sum and product nodes provide summaries over progressively larger scopes. If the PC's root is included in \(\mathcal N\), its coordinate is the conventional likelihood \(p_{\mathcal C}(\mathbf x).\) Thus, the HLV can retain the root likelihood while augmenting it with probabilistically interpretable hierarchical summaries from within the circuit. The selected node set \(\mathcal N\) is fixed independently of the test batch.

\subsection{Hierarchical Likelihood Distance: Distribution Comparison using HLVs}
\begin{definition} [Hierarchical Likelihood Distance]
Let  \(\boldsymbol{\mu}_P=\mathbb{E}_{\mathbf X\sim P}[\textbf{L(X)}]\) and  \(\boldsymbol{\mu}_Q=\mathbb{E}_{\mathbf X\sim Q}[\textbf{L(X)}]\) denote the expected HLVs of the distributions $P$ and $Q$, respectively. The Hierarchical Likelihood Distance (HLD) between $P$ and $Q$ is defined as:
\[d_{\mathcal C}(P,Q)=\|\boldsymbol{\mu}_P-\boldsymbol{\mu}_Q\|_2\]
 \end{definition}
The HLV and HLD are defined for smooth and decomposable PCs. Exact
computation of their population moments additionally requires
\(\mathcal C\) to be structured decomposable. The HLD measures the discrepancy between two probability distributions through the Euclidean distance between their expected HLV representations. It is non-negative, symmetric and satisfies the triangle inequality. However, it is generally a pseudo-metric, as distinct probability distributions may induce identical expected HLVs. It is easy to see that if $P=Q$, then $d_{\mathcal C}(P,Q)=0$ and so if $d_{\mathcal C}(P, Q) > 0$, then $P \neq Q$.

The converse is not necessarily true, \(d_{\mathcal C}(P, Q)=0\) does not necessarily imply \(P=Q\). The discriminative power of HLD is determined by the finite family of node likelihood functions induced by the selected nodes of the probabilistic circuit. Equivalently, HLD compares distributions through their projections onto the function space spanned by these node likelihoods. Thus, HLD cannot distinguish two distributions that agree on the expectations of all selected node-likelihood functions.

\subsection{HLD as an Integral Probability Metric}
HLD admits an equivalent interpretation as an Integral Probability Metric (IPM), placing it within a well established family of distributional discrepancy measures.
\begin{theorem}Define the function class:
\[\mathcal{F}_\mathcal{C}=\{f_\textbf{a}:\mathbf X\rightarrow \textbf{a}^TL_\mathcal{C}(\mathbf X):\textbf{a}\in \mathbb{R}^m,\|\textbf{a}\|_2\leq1\}\]
Then, \(d_{\mathcal C}(P,Q)=\mathrm{IPM}_{\mathcal{F_\mathcal{C}}}(P,Q)\)
\end{theorem}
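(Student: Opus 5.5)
The plan is to unfold the definition of the IPM and reduce the supremum to the dual characterization of the Euclidean norm. By definition,
\[
\mathrm{IPM}_{\mathcal F_{\mathcal C}}(P,Q)=\sup_{f\in\mathcal F_{\mathcal C}}\bigl|\mathbb E_{\mathbf X\sim P}[f(\mathbf X)]-\mathbb E_{\mathbf X\sim Q}[f(\mathbf X)]\bigr| .
\]
The first step is to check that every expectation here is well defined and finite. Because we restrict to Bernoulli and categorical inputs, each coordinate of \(\mathbf L_{\mathcal C}(\mathbf x)\) lies in \([0,1]\). Hence \(|f_{\mathbf a}(\mathbf x)|\le\|\mathbf a\|_2\|\mathbf L_{\mathcal C}(\mathbf x)\|_2\le\sqrt m\), and both \(\boldsymbol\mu_P\) and \(\boldsymbol\mu_Q\) exist.

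Next, use linearity of expectation to write \(\mathbb E_P[\mathbf a^T\mathbf L(\mathbf X)]=\mathbf a^T\boldsymbol\mu_P\), and similarly for \(Q\). The IPM then becomes \(\sup_{\|\mathbf a\|_2\le1}|\mathbf a^T(\boldsymbol\mu_P-\boldsymbol\mu_Q)|\).

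The remaining step is to evaluate this finite-dimensional supremum; it is the only point that needs an actual argument, and it is standard.

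For the upper bound, Cauchy--Schwarz gives \(|\mathbf a^T\mathbf v|\le\|\mathbf a\|_2\|\mathbf v\|_2\le\|\mathbf v\|_2\), where \(\mathbf v=\boldsymbol\mu_P-\boldsymbol\mu_Q\).

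For the lower bound, split into two cases. If \(\mathbf v=\mathbf 0\), both sides are zero. Otherwise choose \(\mathbf a^\star=\mathbf v/\|\mathbf v\|_2\). This vector is feasible, since \(\|\mathbf a^\star\|_2=1\), and it attains \(\mathbf a^{\star T}\mathbf v=\|\mathbf v\|_2\).

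So the supremum is in fact a maximum, equal to \(\|\boldsymbol\mu_P-\boldsymbol\mu_Q\|_2=d_{\mathcal C}(P,Q)\), which is the claim.

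It is worth remarking that the absolute value in the IPM definition is harmless here. The class \(\mathcal F_{\mathcal C}\) is symmetric under \(\mathbf a\mapsto-\mathbf a\), so the same value results whether or not the absolute value is included.
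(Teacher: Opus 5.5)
Your proof is correct and follows the same route as the paper. Both use linearity of expectation to reduce the IPM to $\sup_{\|\mathbf a\|_2\le 1}|\mathbf a^\top(\boldsymbol\mu_P-\boldsymbol\mu_Q)|$, evaluate it by the dual characterization of the Euclidean norm with maximizer $\mathbf a^\star=(\boldsymbol\mu_P-\boldsymbol\mu_Q)/\|\boldsymbol\mu_P-\boldsymbol\mu_Q\|_2$, and treat the zero case separately. You also spell out two things the paper leaves implicit: the Cauchy--Schwarz upper bound, and the finiteness of the expectations from the boundedness of the HLV.
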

\begin{proof}
By definition,
\begin{align*}
\mathrm{IPM}_{\mathcal{F}_\mathcal{C}}(P,Q)
&=\sup_{\|\textbf{a}\|_2\le1}
\left|\textbf{a}^\top(\boldsymbol{\mu}_P-\boldsymbol{\mu}_Q)\right|\\
&=\|\boldsymbol{\mu}_P-\boldsymbol{\mu}_Q\|_2
=d_{\mathcal C}(P,Q),
\end{align*}
When \(\boldsymbol{\mu}_P=\boldsymbol{\mu}_Q\), both sides are zero; otherwise, the supremum is attained at \( \textbf{a}=\frac{\boldsymbol{\mu}_P-\boldsymbol{\mu}_Q}{\|\boldsymbol{\mu}_P-\boldsymbol{\mu}_Q\|_2}\), by the dual characterization of the Euclidean norm.
\end{proof}
This theorem has an important consequence: the proposed detector is not an ad hoc feature comparison. Rather, it is an IPM over a function class induced directly by the PC.  Unlike generic representation-based IPMs, whose function classes are learned implicitly, every function in $\mathcal{F}_C$ is a linear combination of probabilistically meaningful queries induced by the PC, giving the entire function class a direct probabilistic interpretation. Thus, as noted earlier, the discriminative power of the HLD is governed entirely by the probabilistic query family induced by the PC. Through this mechanism HLD can be connected to other divergence measures as well that is discussed in the Supplementary Material, Section~A.

\subsection{Goodness-of-Fit Test using HLD}
Given $D_Q$ from an unknown distribution $Q$, we estimate the expected HLV as
\(
\boldsymbol{\hat{\mu}}_Q=\frac{1}{T}\sum_{t=1}^{T} L(\mathbf x^{(t)}).
\)
The HLD test statistic between the learned distribution $P$ and the test distribution $Q$ is then
\[
\hat{d}_{\mathcal C}(P,Q)=\|\boldsymbol{\mu}_P-\boldsymbol{\hat{\mu}}_Q\|_2,
\]
where $\boldsymbol{\mu}_P=\mathbb{E}_{\mathbf X\sim P}[\textbf{L(X)}]$ denotes the expected HLV under the distribution represented by $\mathcal C$.

Under the null hypothesis \(H_0:P=Q\), the HLD test statistic is
generally non-zero because \(\widehat{\boldsymbol{\mu}}_Q\) is
estimated from a finite test batch. Let
\[
\mathbf Z_T
=
\widehat{\boldsymbol{\mu}}_Q-\boldsymbol{\mu}_P,
\qquad
\Delta_T
=
\widehat d_{\mathcal C}(P,Q)
=
\|\mathbf Z_T\|_2,
\]
and let
\[
\boldsymbol{\Sigma}_P
=
\operatorname{Cov}_{\mathbf X\sim P}
\left[\mathbf L_{\mathcal C}(\mathbf X)\right].
\]
Under \(H_0\),
\[
\mathbb E[\mathbf Z_T]=\mathbf 0,
\qquad
\operatorname{Cov}(\mathbf Z_T)
=
\frac{1}{T}\boldsymbol{\Sigma}_P.
\]
As the HLV is bounded, the multivariate central limit theorem and
standard results for Gaussian quadratic forms \cite{van98,imh61} give 
\[
\sqrt{T}\,\mathbf Z_T
\xrightarrow{d}
\mathcal N(\mathbf 0,\boldsymbol{\Sigma}_P),
\qquad
T\Delta_T^2
\xrightarrow{d}
\sum_{j=1}^{m}\lambda_j\chi_j^2(1),
\]
where \(\lambda_1,\ldots,\lambda_m\) are the eigenvalues of
\(\boldsymbol{\Sigma}_P\) and the \(\chi_j^2(1)\) variables are
independent. Thus, the asymptotic null distribution of
\(\Delta_T^2\) is a scaled generalized chi-square distribution. Its
mean is exactly
\[
\mathbb E[\Delta_T^2]
=
\frac{1}{T}\operatorname{tr}(\boldsymbol{\Sigma}_P),
\]
and its leading-order asymptotic variance is
\[
\operatorname{Var}(\Delta_T^2)
\approx
\frac{2}{T^2}\operatorname{tr}(\boldsymbol{\Sigma}_P^2).
\]

For a prescribed significance level \(\alpha\), \(H_0\) is rejected
when \(\Delta_T\) exceeds the \((1-\alpha)\)-quantile of its null
distribution. Although this quantile depends on the complete
eigenvalue spectrum of \(\boldsymbol{\Sigma}_P\), an approximate
closed-form threshold can be obtained by matching the first two
moments of the generalized chi-square distribution:
\[
\Delta_T^2
\approx
\mathcal N\left(
\frac{\operatorname{tr}(\boldsymbol{\Sigma}_P)}{T},
\frac{2\operatorname{tr}(\boldsymbol{\Sigma}_P^2)}{T^2}
\right).
\]
Thus, the closed-form threshold can be computed as,
\[
\tau_\alpha
\approx
\sqrt{
\frac{\operatorname{tr}(\boldsymbol{\Sigma}_P)}{T}
+
\frac{z_{1-\alpha}}{T}
\sqrt{2\operatorname{tr}(\boldsymbol{\Sigma}_P^2)}
},
\]
and the test rejects \(H_0\) whenever
\(\Delta_T>\tau_\alpha\).

This approximation is expected to be accurate when no small number of eigenvalues of \(\boldsymbol{\Sigma}_P\) dominates its spectrum\cite{bil95}; otherwise, the quantile of the generalized chi-square distribution may be computed numerically from the eigenvalues. The threshold depends on the covariance of the HLV under \(P\) only through \(\tr(\boldsymbol{\Sigma}_P)\) and \(\tr(\boldsymbol{\Sigma}_P^2)\).

The central challenge is therefore no longer the hypothesis test itself, but the computation of the quantities \(\boldsymbol{\mu}_P, \tr(\boldsymbol{\Sigma}_P),\) and \(\tr(\boldsymbol{\Sigma}_P^2)\). Classical multivariate hypothesis tests estimate these statistics from an independent ID calibration set. In contrast, we show that the structural properties of PCs enable these quantities to be exactly and tractably computed from the learned PC, eliminating the need for additional ID samples.

We first establish the following structural property of structured-decomposable PCs, which underlies the exact computations in this section.
\begin{lemma}[Scope Trichotomy]
Let \(\mathcal C\) be a structured-decomposable and smooth PC with respect to a v-tree \(\mathcal T\). For any two nodes \(n_i,n_j \in \mathcal C\) with scopes \(S_i, S_j\) exactly one of the following holds: \(S_i\cap S_j =\emptyset \) or \(S_i=S_j\) or one of \(S_i, S_j\) strictly contains the other.    
\label{lemma:scope}
\end{lemma}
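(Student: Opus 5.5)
The plan is to map every node of \(\mathcal C\) to a vertex of the v-tree \(\mathcal T\) whose leaf set equals the node's scope. Once that is done, the trichotomy follows from the laminar structure of leaf sets in a rooted tree.

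First we record what structured decomposability provides. By definition, every product node \(n\) with children \(c_1,\dots,c_k\) respects some v-tree vertex \(v\). Binarize if necessary, so that \(v\) has children \(v_1,\dots,v_k\), and the scopes satisfy \(S_{c_i}\subseteq \mathrm{leaves}(v_i)\). In the standard normalized form used for tractable circuit products, this holds with equality: \(S_n=\mathrm{leaves}(v)\) and \(S_{c_i}=\mathrm{leaves}(v_i)\). Input nodes have singleton scopes \(\{X_k\}\), which equal the leaf set of the v-tree leaf labelled \(X_k\). Sum nodes have, by smoothness, a scope equal to that of each of their children. We then argue by structural induction from the inputs upward. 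Every node \(n\) has a scope of the form \(S_n=\mathrm{leaves}(v_n)\) for some vertex \(v_n\) of \(\mathcal T\). For inputs this is immediate. For sum nodes we inherit \(v_n\) from any child. For product nodes we take \(v_n\) to be the v-tree vertex the node respects; its scope is the disjoint union of the children's scopes, which is \(\bigcup_i \mathrm{leaves}(v_i)=\mathrm{leaves}(v_n)\).

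Next comes the tree fact. For any two vertices \(u,w\) of a rooted tree, exactly one of three things holds. If neither is an ancestor of the other, their leaf sets are disjoint, because leaves have unique ancestor paths. If \(u=w\), the leaf sets are equal. If one is a proper ancestor of the other, one leaf set contains the other, and the containment is strict whenever the two sets differ. Equal leaf sets with \(u\neq w\) can only arise through unary chains, and this case falls under ``\(S_i=S_j\)''. Applying the fact to \(v_{n_i},v_{n_j}\) shows that one of the three alternatives holds. Exclusivity is immediate because scopes are nonempty: disjoint nonempty sets cannot be equal or nested, and strict containment excludes equality.

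The main obstacle is the first step, namely justifying that product-node scopes coincide exactly with v-tree leaf sets rather than merely being contained in them. Under the weaker containment definition, a product could respect \(v\) while its children cover only parts of \(\mathrm{leaves}(v_1)\) and \(\mathrm{leaves}(v_2)\), which would break the laminarity. We will handle this by adopting the convention of \cite{ver21} that structured-decomposable circuits are smooth and normalized so that each node's scope is exactly a v-tree leaf set. Alternatively, we can argue directly that smoothness propagates full v-tree scopes. Partial scopes can arise only from a product respecting \(v\) whose children do not fill \(\mathrm{leaves}(v_i)\), and such nodes can be made to fill these sets by multiplying in marginalized (constant-one) factors, which leaves the represented distributions unchanged.
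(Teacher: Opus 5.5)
Your proposal is correct and follows the paper's route. You identify each node's scope with the leaf set of a v-tree vertex, which the paper simply asserts from smoothness and structured decomposability and which you obtain from the normalized convention. You then read the trichotomy off the equal/ancestor/incomparable relations in the tree, with exclusivity coming from nonemptiness of scopes.

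One side remark: the containment-only definition is not really an obstacle here. A top-down induction from the root, whose scope is all of $\mathcal X$, shows via laminarity that if a product node's scope is a v-tree leaf set, then each child's scope equals the full leaf set of the corresponding v-tree child. By contrast, your fallback of padding with constant-one factors would change the scopes in $\mathcal C$, so it would not by itself prove the statement for the given circuit.
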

\begin{proof}
    Refer to the Supplementary Material, Section~B.
\end{proof}
\subsubsection{Exact Computation of \(\boldsymbol{\mu}_P\)}
We show that for smooth and structured-decomposable PCs, every component of \(\boldsymbol{\mu}_P\) can be computed exactly and tractably from the trained PC. For a selected node $n$ with scope $S_n$, let $\ell_n(\mathbf X)=p_n(\mathbf{X}_{S_n})$ denote the corresponding entry of the HLV, where $p_n$ is the probability distribution represented by the subcircuit rooted at $n$. The corresponding component of the population mean is
\[
\mu_{P_n}
=
\mathbb{E}_{\mathbf X\sim P}\!\left[\ell_n(\mathbf X)\right]
=
\sum_{\mathbf x}
p(\mathbf x)\,p_n(\mathbf x_{S_n})\,.
\]
As $p_n(\mathbf{x}_{S_n})$ depends only on the variables in $S_n$, the variables outside the scope can be marginalized out, yielding
\[
\mu_{p_n}
=
\sum_{\mathbf x_{S_n}}
p_{S_n}(\mathbf{x}_{S_n})\,p_n(\mathbf{x}_{S_n})\,
\]
where $p_{S_n}$ denotes the marginal distribution of the root PC over the scope $S_n$. Thus, computing $\mu_{P_n}$ reduces to evaluating the overlap between the node distribution $p_n$ and the corresponding marginal of the root distribution.

The above expectation can be evaluated exactly using standard PC operations. The marginal distribution $p_{S_n}$ is first obtained by marginalizing the root PC over the variables outside $S_n$. As the PC is smooth and structured-decomposable, the marginalized root distribution $p_{S_n}$ and the subcircuit distribution $p_n$ are both represented over the same induced v-tree on the scope $S_n$. Consequently, their product can be computed exactly using the tractable product operation for PCs \cite{ver21}. Furthermore, the resulting product PC is also smooth and decomposable. Marginalizing this product circuit over all variables in $S_n$ yields the exact value of $\mu_{Pn}$. Repeating this procedure for every selected node computes the complete population mean vector $\boldsymbol{\mu}_P$. 

As both \(p_{S_n}\) and \(p_n\) contain at most \(N_r\) nodes, their decomposable product contains at most \(O(N_r^2)\) nodes in the worst case. The subsequent marginalization over the product circuit is linear in its size, yielding an overall worst-case complexity of $O(N_r^2)$ for computing a single component $\boldsymbol{\mu}_{Pn}$. Therefore, if the HLV is constructed using $S$ selected nodes, the complete population mean vector $\boldsymbol{\mu}_P$ is computed exactly in $O(SN_r^2)$ time. If the HLV uses all nodes of the PC, then the complexity becomes $O(N_r^3)$.
\subsubsection{Exact computation of $\boldsymbol{\Sigma}_P$}
The second and third quantities required by the hypothesis test are the traces $\tr(\boldsymbol{\Sigma}_P)$ and $\tr(\boldsymbol{\Sigma}_P^2)$. We show that the full covariance matrix $\boldsymbol{\Sigma}_P$ can be computed exactly. The $ij^{th}$ entry in this matrix is of the form  \(\mathbb{E}_{\mathbf X\sim P}[\ell_i(\mathbf X)\ell_j(\mathbf X)]-\mu_{Pi}\mu_{Pj},\) so the computation reduces to evaluating the second order moment
\[\mathbb{E}_{\mathbf X\sim P}\!\left[\ell_i(\mathbf X)\,\ell_j(\mathbf X)\right]=\sum_{\mathbf x}p(\mathbf x)\,p_i(\mathbf{x}_{S_i})\,p_j(\mathbf{x}_{S_j}).\]
As before, $p_i$ and $p_j$ depend only on the variables in $S_i$ and $S_j$, the variables outside $S_i\cup S_j$ are marginalized out. Let \(S_{ij}\) denote the smallest scope in the induced v-tree
containing \(S_i\cup S_j\), i.e., the scope of the lowest common
ancestor of the v-tree nodes associated with \(S_i\) and \(S_j\), yielding \[\mathbb{E}_{\mathbf X\sim P}[\ell_i(\mathbf X)\ell_j(\mathbf X)]=\sum_{\mathbf x_{S_{ij}}}p_{S_{ij}}(\mathbf{x}_{S_{ij}})\,p_i(\mathbf{x}_{S_i})\,p_j(\mathbf{x}_{S_j}).\]
This expectation is evaluated using the same sequence of tractable PC operations described in the previous subsection. The root PC is first marginalized onto \(S_{ij}\), after which the product
\( p_{S_{ij}} \otimes p_i \otimes p_j \)
is constructed. The tractable product operation applies to any pair of compatible PCs \cite{ver21}. In our setting, the operands are obtained from the same structured-decomposable PC and therefore conform to a common v-tree. By Lemma \ref{lemma:scope}, the scopes \(S_i\) and \(S_j\) are either identical, nested, or disjoint; partial overlap cannot occur. In each of these cases, the corresponding subcircuits are compatible and the product operation can be applied successively to construct \(p_{S_{ij}} \otimes p_i \otimes p_j\). Marginalizing the resulting product circuit over \(S_{ij}\) yields the exact value of
\(
\mathbb E_{\mathbf X\sim P}
\left[\ell_i(\mathbf X)\ell_j(\mathbf X)\right]
\)
and hence \((\boldsymbol{\Sigma}_P)_{ij}\). Repeating this construction for all pairs of selected nodes gives the full covariance matrix \(\boldsymbol{\Sigma}_P\).

As each second-order moment is computed in worst-case time $O(N_r^3)$, the overall complexity of computing $\boldsymbol{\Sigma}_P$ is $O(S^2N_r^3)$. \textit{NOTE: As computing $\boldsymbol{\mu}_P$, $\boldsymbol{\Sigma}_P$ is performed only once after training, it does not affect the online complexity of OOD detection.}

\section{Experiments}
\subsection{Experimental Setup}
\paragraph{Datasets.}
We evaluate HLD on the Adult, Covertype, Sensorless, Census-KDD, Connect-4 tabular datasets from the UCI Machine Learning Repository \cite{bec96,bla98,tro95,cen00,bat13} and on the binarized MNIST image dataset \cite{lec98}. We adopt a class-wise OOD protocol. For each experiment, one class is treated as ID, while each remaining class is considered as a separate OOD. This produces multiple ID--OOD pairs for each dataset. Dataset preprocessing and the resulting pairs are described in Supplementary Material, Section~C.

\paragraph{Evaluation Protocol.} 
The experiments follow the problem formulation introduced in methodology, where the trained PC is assumed to be the sole representation of ID. HLD computes all population quantities required by the hypothesis test analytically from the trained PC. In contrast, baselines requiring ID reference samples obtain them by sampling from the trained PC. Unless otherwise stated, the reference set size is chosen to equal the test batch size \cite{efr79,efr93}. 

For each ID--OOD pair, a single PC is trained using only the training data from the designated ID class. For each test-batch size \(T\), we perform 500
independent Monte Carlo trials. For HLD, the population quantities \(\boldsymbol{\mu}_P\), \(\boldsymbol{\Sigma}_P\) and hence the decision threshold, are computed directly from the resulting PC. Thus, HLD does not require an ID reference sample. For baselines that require an ID reference set, we independently draw \(T\) samples from the trained PC, matching the size of the test batch. A fresh test batch is also drawn in each trial from either the ID or OOD evaluation data, depending on whether Type-I error or detection power is being measured. Regenerating the reference and test samples in every trial account for  sampling variability.

\paragraph{Density Models.}
Unless otherwise stated, we use Hidden Chow--Liu Trees (HCLTs)
\cite{liu21}, implemented in PyJuice \cite{liu24}, with four latent
states. The resulting PCs are smooth and structured-decomposable. We use categorical input distributions for the tabular datasets and Bernoulli input distributions for binarized MNIST. Model training and convergence criteria are described in Supplementary Material, Section~C.  Detection stays near-perfect across HCLT capacities and RAT-SPN, with HCLT competitive or better at matched budget. Further details are provided in the Supplementary Material, Section~D.

\paragraph{Baselines.}
We compare HLD with representative likelihood-based, uncertainty-based and distributional OOD detectors: Root Likelihood (RootLL), Typicality, Tractable Dropout Inference (TDI) and Maximum Mean Discrepancy (MMD). Each baseline is adapted to the same batch-level OOD setting. Further details of the baseline methods are provided in the Supplementary Material, Section~C.

\paragraph{Performance Measures.}
We evaluate each detector at a prescribed significance level
\(\alpha\) using its false-positive rate and detection power. We
report two false-positive rates. First, \(\mathrm{FPR}_{\mathrm{model}}\) is measured on test batches sampled from the trained PC and evaluates calibration under the model-based null hypothesis \(Q=P\). Second, \(\mathrm{FPR}_{\mathrm{data}}\) is measured on batches drawn from the held-out ID test set and evaluates false alarms relative to the underlying data distribution. The difference between these quantities reflects model misspecification: \(\mathrm{FPR}_{\mathrm{model}}\) is the empirical Type-I error of the statistical test, whereas \(\mathrm{FPR}_{\mathrm{data}}\) also depends on how accurately the PC represents the true ID distribution. Detection power is the proportion of OOD test batches for which the null hypothesis is rejected. Unless otherwise stated, results are averaged over all ID--OOD pairs and 500 Monte Carlo trials.

\subsection{Results}

We organize the empirical evaluation around three research questions that analyze the HLD framework:
\begin{itemize}
    \item \textbf{Q1:} Do hierarchical likelihood summaries, captured in the HLV, improve distributional OOD detection over root-level and existing baselines?
    \item \textbf{Q2:} How does the choice of PC node types affect the detection performance of HLD?
    \item \textbf{Q3:} Can node-level HLV discrepancies localize the variables associated with a distribution shift?
\end{itemize}
\subsubsection{\textbf{Q1:} Do hierarchical likelihood summaries, captured in the HLV, improve distributional OOD detection over root-level and existing baselines?}
\paragraph{Tabular datasets.}
Table~\ref{tab:tabular-19pairs} reports results aggregated over 500
Monte Carlo trials for every ID--OOD pair across the five tabular
datasets at $\alpha=0.05$. Results for each dataset at other $\alpha$ is presented in Supplementary Material, Section E and D respectively.

Under the model-based null, all methods remain close to the target
significance level \(\alpha=0.05\). HLD is slightly liberal, with
\(\mathrm{FPR}_{\mathrm{model}}\) between \(0.059\) and \(0.067\),
while the remaining methods generally remain closer to \(0.05\).

On test batches drawn from held-out ID data, false-positive rates
increase with batch size because discrepancies between the learned
PC and the underlying data distribution (if any) become easier to detect. Although HLD does not attain the lowest
\(\mathrm{FPR}_{\mathrm{data}}\) at every small batch size, its
false-positive rate increases substantially slower for larger
batches. At \(T=1000\), HLD obtains an
\(\mathrm{FPR}_{\mathrm{data}}\) of \(0.178\), compared with \(0.807\)
for MMD, \(0.446\) for RootLL, \(0.753\) for Typicality and \(0.428\) for TDI. \textit{This indicates that HLD is comparatively more robust to model misspecification as the test batch grows.}

HLD also achieves the highest detection power for
\(10\leq T\leq100\), reaching \(0.716\), \(0.932\) and \(0.987\) at
\(T=10\), \(50\) and \(100\) respectively. MMD is the closest
competitor, with corresponding powers of \(0.681\), \(0.892\),
\(0.979\) and both methods reach essentially perfect detection from
\(T=200\). In contrast, the scalar-score baselines improve more
slowly: at \(T=1000\), RootLL, Typicality and TDI attain powers of
\(0.774\), \(0.851\) and \(0.785\) respectively. 

The variability across datasets and ID--OOD pairs provides an
additional distinction between the methods. For
\(\mathrm{FPR}_{\mathrm{data}}\), HLD has the smallest or nearly
smallest standard deviation across the evaluated batch sizes. The
difference becomes particularly pronounced for moderate and large
batches; at \(T=500\), its standard deviation is \(0.06\), compared with \(0.35\), \(0.12\), \(0.31\) and \(0.12\) for MMD, RootLL, Typicality and TDI respectively. Thus, HLD's robustness to model misspecification is more consistent across the evaluated datasets.

A similar pattern appears in detection power. HLD exhibits
substantially lower variability at small and moderate batch sizes. At \(T=100\), its standard deviation is \(0.03\), compared with \(0.05\) for MMD and approximately \(0.45\) for the scalar-score baselines. MMD becomes comparably stable once its detection power approaches saturation, whereas RootLL, Typicality and TDI retain large variability across ID--OOD pairs. This indicates that the aggregate performance of the scalar baselines conceals substantial differences in their ability to detect particular distribution shifts. 

\begin{table}[t]
\centering
\small
\setlength{\tabcolsep}{1mm}
\begin{tabular}{rccccc}
\toprule
$T$ & HLD & MMD & RootLL & Typicality & TDI \\
\midrule
\multicolumn{6}{l}{\emph{(a) \(\mathrm{FPR}_{\mathrm model}\) --- PC-vs-PC null (target $0.05$)}} \\
1    & $.059{\pm}.04$ & $\mathbf{.029{\pm}.02}$ & $\textit{.049}{\pm}\textit{.01}$ & $.053{\pm}.01$ & $\textit{.049}{\pm}\textit{.01}$ \\
10   & $.063{\pm}.01$ & $\mathbf{.049{\pm}.01}$ & $.057{\pm}.01$ & $\textit{.056}{\pm}\textit{.01}$ & $.057{\pm}.01$ \\
50   & $.066{\pm}.01$ & $\textit{.052}{\pm}\textit{.01}$ & $.055{\pm}.01$ & $\mathbf{.050{\pm}.02}$ & $.053{\pm}.01$ \\
100  & $.066{\pm}.02$ & $.051{\pm}.01$ & $\textit{.047}{\pm}\textit{.01}$ & $.048{\pm}.01$ & $\mathbf{.046{\pm}.01}$ \\
200  & $.065{\pm}.01$ & $.051{\pm}.01$ & $\mathbf{.044{\pm}.01}$ & $.047{\pm}.01$ & $\textit{.045}{\pm}\textit{.01}$ \\
500  & $.067{\pm}.01$ & $\textit{.048}{\pm}\textit{.02}$ & $\textit{.048}{\pm}\textit{.02}$ & $\mathbf{.047{\pm}.01}$ & $.049{\pm}.01$ \\
1000 & $.061{\pm}.01$ & $\mathbf{.047{\pm}.01}$ & $.051{\pm}.01$ & $\textit{.050}{\pm}\textit{.01}$ & $.051{\pm}.01$ \\
\midrule
\multicolumn{6}{l}{\emph{(b) \(\mathrm{FPR}_{\mathrm{data}}\) --- PC reference -vs-\ real held-out ID}} \\
1    & $\textit{.037}{\pm}{.02}$ & $\mathbf{.023{\pm}.02}$ & $.047{\pm}.02$ & $.050{\pm}.04$ & $.048{\pm}.02$ \\
10   & $.065{\pm}.02$ & $.071{\pm}.02$ & $\textit{.054}{\pm}\textit{.03}$ & $\mathbf{.048{\pm}.04}$ & $.055{\pm}.03$ \\
50   & $.076{\pm}.02$ & $.096{\pm}.04$ & $\mathbf{.060{\pm}.03}$ & $\textit{.067}{\pm}\textit{.06}$ & $\mathbf{.060{\pm}.03}$ \\
100  & $\textit{.086}{\pm}\textit{.02}$ & $.166{\pm}.10$ & $\mathbf{.070{\pm}.03}$ & $.091{\pm}.09$ & $\mathbf{.070{\pm}.03}$ \\
200  & $\mathbf{.103{\pm}.03}$ & $.362{\pm}.24$ & $.107{\pm}.04$ & $.139{\pm}.12$ & $\textit{.104}{\pm}\textit{.04}$ \\
500  & $\mathbf{.135{\pm}.06}$ & $.678{\pm}.35$ & $.227{\pm}.12$ & $.420{\pm}.31$ & $\textit{.219}{\pm}\textit{.12}$ \\
1000 & $\mathbf{.178{\pm}.08}$ & $.807{\pm}.23$ & $.446{\pm}.23$ & $.753{\pm}.22$ & $\textit{.428}{\pm}\textit{.23}$ \\
\midrule
\multicolumn{6}{l}{\emph{(c) Detection rate --- PC reference -vs-\ OOD}} \\
1    & $.040{\pm}.05$ & $.028{\pm}.04$ & $\textit{.090}{\pm}\textit{.10}$ & $\mathbf{.119{\pm}.15}$ & $.089{\pm}.10$ \\
10   & $\mathbf{.716{\pm}.07}$ & $\textit{.681}{\pm}\textit{.38}$ & $.276{\pm}.33$ & $.352{\pm}.37$ & $.275{\pm}.33$ \\
50   & $\mathbf{.932{\pm}.11}$ & $\textit{.892}{\pm}\textit{.22}$ & $.461{\pm}.44$ & $.555{\pm}.46$ & $.463{\pm}.44$ \\
100  & $\mathbf{.987{\pm}.03}$ & $\textit{.979}{\pm}\textit{.05}$ & $.542{\pm}.45$ & $.608{\pm}.47$ & $.544{\pm}.45$ \\
200  & $\mathbf{1.00{\pm}.00}$ & $\mathbf{1.00{\pm}.00}$ & $.619{\pm}.45$ & $\textit{.654}{\pm}\textit{.46}$ & $.622{\pm}.44$ \\
500  & $\mathbf{1.00{\pm}.00}$ & $\mathbf{1.00{\pm}.00}$ & $.696{\pm}.47$ & $\textit{.773}{\pm}\textit{.38}$ & $.701{\pm}.41$ \\
1000 & $\mathbf{1.00{\pm}.00}$ & $\mathbf{1.00{\pm}.00}$ & $.774{\pm}.37$ & $\textit{.851}{\pm}\textit{.35}$ & $.785{\pm}.35$ \\
\bottomrule
\end{tabular}
\caption{Detection results averaged across the tabular datasets with $\alpha=0.05$ (\textbf{Bold} is the best performing method and \textit{Italics} is the second best).}
\label{tab:tabular-19pairs}
\end{table}
\begin{table}[h]
\centering
\small
\setlength{\tabcolsep}{1mm}
\begin{tabular}{rccccc}
\toprule
$T$ & HLD & MMD & RootLL & Typicality & TDI \\
\midrule
\multicolumn{6}{l}{\emph{(a)  \(\mathrm{FPR}_{\mathrm model}\) --- PC-vs-PC null (target $0.05$)}} \\
1    & $.089{\pm}.03$ & $\mathbf{.038{\pm}.01}$ & $\textit{.049}{\pm}\textit{.01}$ & $\textit{.049}{\pm}\textit{.01}$ & $\textit{.049}{\pm}\textit{.01}$ \\
10   & $.065{\pm}.01$ & $\mathbf{.053{\pm}.01}$ & $\textit{.054}{\pm}\textit{.02}$ & $\textit{.054}{\pm}\textit{.01}$ & $\mathbf{.053{\pm}.01}$ \\
50   & $.063{\pm}.01$ & $.051{\pm}.01$ & $.051{\pm}.01$ & $\mathbf{.048{\pm}.01}$ & $\textit{.050}{\pm}\textit{.01}$ \\
100  & $.063{\pm}.01$ & $\textit{.048}{\pm}\textit{.01}$ & $\mathbf{.046{\pm}.01}$ & $.052{\pm}.01$ & $\mathbf{.046{\pm}.01}$ \\
200  & $.065{\pm}.01$ & $\textit{.050}{\pm}\textit{.01}$ & $.048{\pm}.01$ & $.051{\pm}.01$ & $\mathbf{.047{\pm}.01}$ \\
500  & $.067{\pm}.01$ & $\mathbf{.046{\pm}.01}$ & $\textit{.050}{\pm}\textit{.01}$ & $.050{\pm}.02$ & $.051{\pm}.01$ \\
1000 & $.065{\pm}.02$ & $\mathbf{.047{\pm}.01}$ & $.051{\pm}.01$ & $\textit{.048}{\pm}\textit{.01}$ & $.052{\pm}.01$ \\
\midrule
\multicolumn{6}{l}{\emph{(b) \(\mathrm{FPR}_{\mathrm{data}}\) --- PC reference -vs-\ real held-out ID}} \\
1    & $.099{\pm}.04$ & $\mathbf{.045{\pm}.01}$ & $.079{\pm}.02$ & $.085{\pm}.03$ & $\textit{.079}{\pm}\textit{.01}$ \\
10   & $\textit{.078}{\pm}\textit{.01}$ & $\mathbf{.062{\pm}.01}$ & $.086{\pm}.02$ & $.122{\pm}.03$ & $.085{\pm}.02$ \\
50   & $\textit{.079}{\pm}\textit{.02}$ & $\mathbf{.068{\pm}.01}$ & $.097{\pm}.02$ & $.141{\pm}.03$ & $.093{\pm}.02$ \\
100  & $\textit{.081}{\pm}\textit{.02}$ & $\mathbf{.080{\pm}.02}$ & $.105{\pm}.02$ & $.181{\pm}.04$ & $.099{\pm}.02$ \\
200  & $\mathbf{.093{\pm}.03}$ & $\textit{.115}{\pm}\textit{.03}$ & $.136{\pm}.05$ & $.226{\pm}.07$ & $.123{\pm}.04$ \\
500  & $\mathbf{.123{\pm}.06}$ & $.290{\pm}.10$ & $.224{\pm}.10$ & $.376{\pm}.17$ & $\textit{.197}{\pm}\textit{.09}$ \\
1000 & $\mathbf{.178{\pm}.12}$ & $.592{\pm}.15$ & $.478{\pm}.19$ & $.574{\pm}.27$ & $\textit{.429}{\pm}\textit{.17}$ \\
\midrule
\multicolumn{6}{l}{\emph{(c) Detection rate --- PC reference -vs-\ OOD}} \\
1    & $.419{\pm}.28$ & $.193{\pm}.20$ & $.498{\pm}.27$ & $\mathbf{.624{\pm}.27}$ & $\textit{.500}{\pm}\textit{.27}$ \\
10   & $\mathbf{.997{\pm}.01}$ & $\textit{.962}{\pm}\textit{.09}$ & $.905{\pm}.20$ & $.944{\pm}.18$ & $.906{\pm}.20$ \\
50   & $\mathbf{1.00{\pm}.00}$ & $\mathbf{1.00{\pm}.00}$ & $.969{\pm}.15$ & $\textit{.977}{\pm}\textit{.14}$ & $.969{\pm}.15$ \\
100  & $\mathbf{1.00{\pm}.00}$ & $\mathbf{1.00{\pm}.00}$ & $.977{\pm}.14$ & $\textit{.980}{\pm}\textit{.13}$ & $.977{\pm}.14$ \\
200  & $\mathbf{1.00{\pm}.00}$ & $\mathbf{1.00{\pm}.00}$ & $.980{\pm}.13$ & $\textit{.981}{\pm}\textit{.13}$ & $.980{\pm}.13$ \\
500  & $\mathbf{1.00{\pm}.00}$ & $\mathbf{1.00{\pm}.00}$ & $.981{\pm}.13$ & $\textit{.984}{\pm}\textit{.11}$ & $.982{\pm}.12$ \\
1000 & $\mathbf{1.00{\pm}.00}$ & $\mathbf{1.00{\pm}.00}$ & $.983{\pm}.11$ & $\textit{.987}{\pm}\textit{.09}$ & $.984{\pm}.11$ \\
\bottomrule
\end{tabular}
\caption{Detection results on MNIST (7$\times$7 resolution) with $\alpha=0.05$ (\textbf{Bold} is the best performing method and \textit{Italics} is the second best).}
\label{tab:mnist-90pairs}
\end{table}

\paragraph{MNIST.}
Table \ref{tab:mnist-90pairs} reports binarized MNIST at \(7\times7\) resolution, the advantage of HLD
again emerges when multiple test observations are available. Although HLD does not perform best at \(T=1\), its detection power reaches \(0.997\) at \(T=10\), compared with \(0.962\) for
MMD, \(0.905\) for RootLL, \(0.944\) for Typicality and \(0.906\) for TDI. At \(T=100\), HLD and MMD attain perfect detection, while the other baselines remain slightly lower and exhibit greater variability. HLD also becomes increasingly robust on held-out ID data as the batch size grows relative to the competing methods.

The full-resolution \(28\times28\) results, reported in Supplementary Material, Section~E, reveal the effect of model misspecification. All methods achieve perfect or near-perfect detection at relatively small batch sizes, but the \(\mathrm{FPR}_{\mathrm{data}}\) of HLD at \(T=1000\) increases from \(0.178\) at \(7\times7\) resolution to \(0.823\) at \(28\times28\). This behavior is consistent with the greater difficulty of accurately modeling the higher-dimensional pixel distribution. Because HLD aggregates discrepancies across multiple node-likelihood coordinates, small systematic modeling errors accumulate and become increasingly detectable as \(T\) grows. Thus, while the test is calibrated with respect to the distribution represented by the trained PC, its false-positive rate on real ID data necessarily depends on how faithfully that PC 
represents the underlying data distribution.

\subsection{Q2: How does the choice of PC node types affect the
detection performance of HLD?}

Table~\ref{tab:node-ablation} compares HLVs constructed from different combinations of leaf, sum and product nodes, with detection power aggregated across the five tabular datasets as \(T\) increases. We report results only up to \(T=100\), as detection saturates at larger batch sizes. The combination of leaf and sum nodes performs best at every evaluated batch size. Adding product nodes to the leaf+sum representation reduces detection power while increasing the HLV dimension.

Although product nodes may add interaction moments to the HLV
representation, these additional coordinates do not translate into
improved detection power in the evaluated settings. In particular,
leaf+sum consistently outperforms leaf+sum+product despite having a
substantially lower dimension. This result shows that the product-node coordinates provide no empirical benefit for the circuits and
distribution shifts considered here. Determining whether this  behavior arises from redundancy, coordinate scaling or the covariance structure of the augmented HLV requires a more detailed analysis and is left for future work.

Dataset-specific results in Supplementary Material, Section~D, show that neither leaf nor sum nodes dominate uniformly: leaf nodes carry more OOD signal on some datasets while sum nodes are more informative on others. Combining these two node families therefore provides the most robust choice across datasets and we use leaf and sum nodes to construct the HLV in the remaining experiments.

\begin{table}\centering\small
\setlength{\tabcolsep}{5pt}
\begin{tabular}{lrrrrr}
\toprule
node set & $\dim$ & $T{=}1$ & $T{=}10$ & $T{=}50$ & $T{=}100$  \\
\midrule
leaf$+$sum          & 118 & \textbf{0.050} & \textbf{0.369} & \textbf{0.844} & \textbf{0.918} \\
leaf$+$prod         & 191 & 0.036 & 0.149 & 0.615 & 0.805  \\
sum$+$prod          & 143 & 0.038 & 0.092 & 0.455 & 0.699 \\
leaf$+$sum$+$prod   & 261 & \textit{0.047} & \textit{0.177} & \textit{0.622} & \textit{0.807}  \\
\bottomrule
\end{tabular}
\caption{Detection rate averaged across the Tabular datasets on different node combinations (\textbf{Bold} is the best performing method and \textit{Italics} is the second best). }
\label{tab:node-ablation}
\end{table}

\subsection{Q3: Can node-level HLV discrepancies localize the
variables associated with a distribution shift?}

Unlike scalar-score detectors, HLD retains the discrepancy associated with every selected PC node. In particular, the squared test statistic decomposes as
\(
\Delta_T^2
=
\left\|
\widehat{\boldsymbol{\mu}}_Q-\boldsymbol{\mu}_P
\right\|_2^2
=
\sum_{j=1}^{m}
\left(
\widehat{\mu}_{Qj}-\mu_{Pj}
\right)^2.
\)
We therefore define the contribution of node \(n_j\) as
\(
\delta_j
=
\left|
\widehat{\mu}_{Qj}-\mu_{Pj}
\right|,
\)
rank the selected nodes by \(\delta_j\) and inspect the scopes of the top-\(K\) nodes. As the scope of each node identifies the input variables represented by its subcircuit, this procedure associates the detected discrepancy with particular variable subsets. These attributions should be interpreted as localization cues rather than causal explanations as the node scopes may be overlapping or nested.

Figure~\ref{fig:comparison} illustrates this procedure for a PC trained on digits \(0\),\(7\),\(6\) with test batches containing digits \(6\),\(8\),\(9\). For the \(0\)-versus-\(8\) shift, the highest-contributing node scopes concentrate around the central image region where the additional stroke distinguishes an \(8\) from a \(0\), rather than along the outer contour shared by both digits. For the \(0\)-versus-\(6\) shift, the highlighted scopes concentrate around the central and upper-right regions that distinguish the two digit shapes. These examples show that the node-wise decomposition of HLD can provide interpretable localization of the variables associated with a detected distribution shift.

\begin{figure}
\centering
\includegraphics[width=0.45\columnwidth]{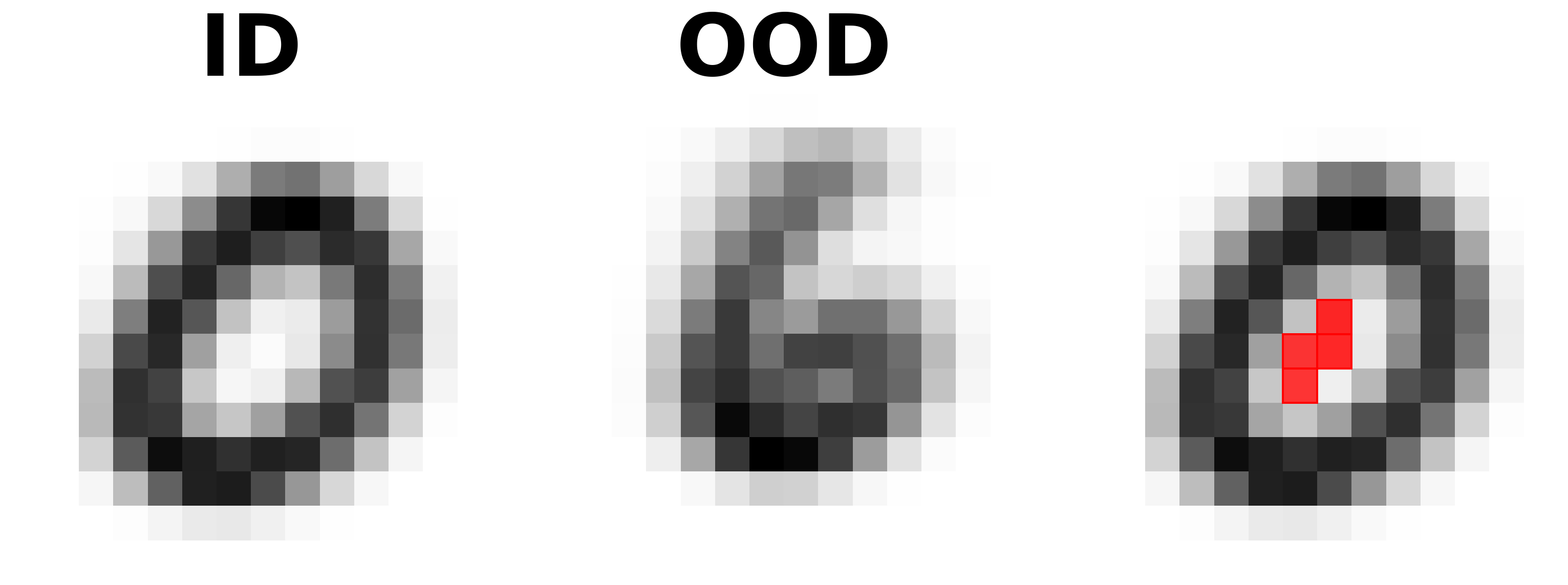}
\hspace{0.03\columnwidth}
\includegraphics[width=0.45\columnwidth]{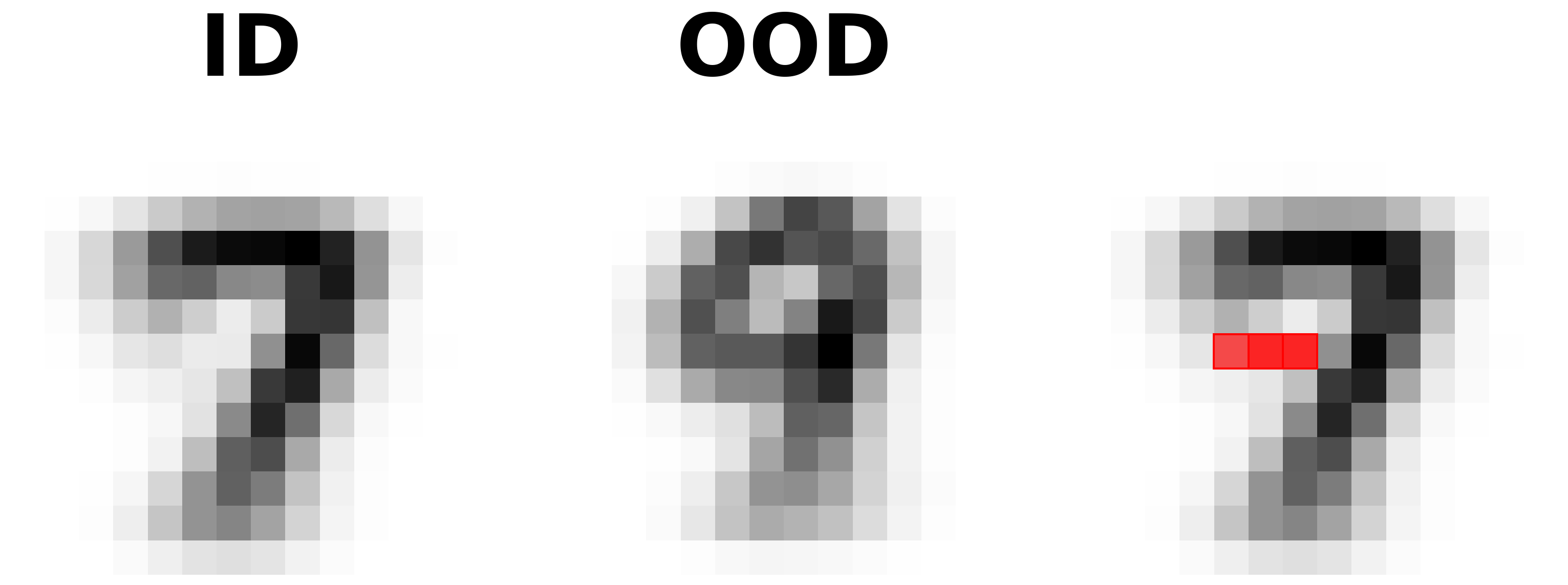}\\
\includegraphics[width=0.45\columnwidth]{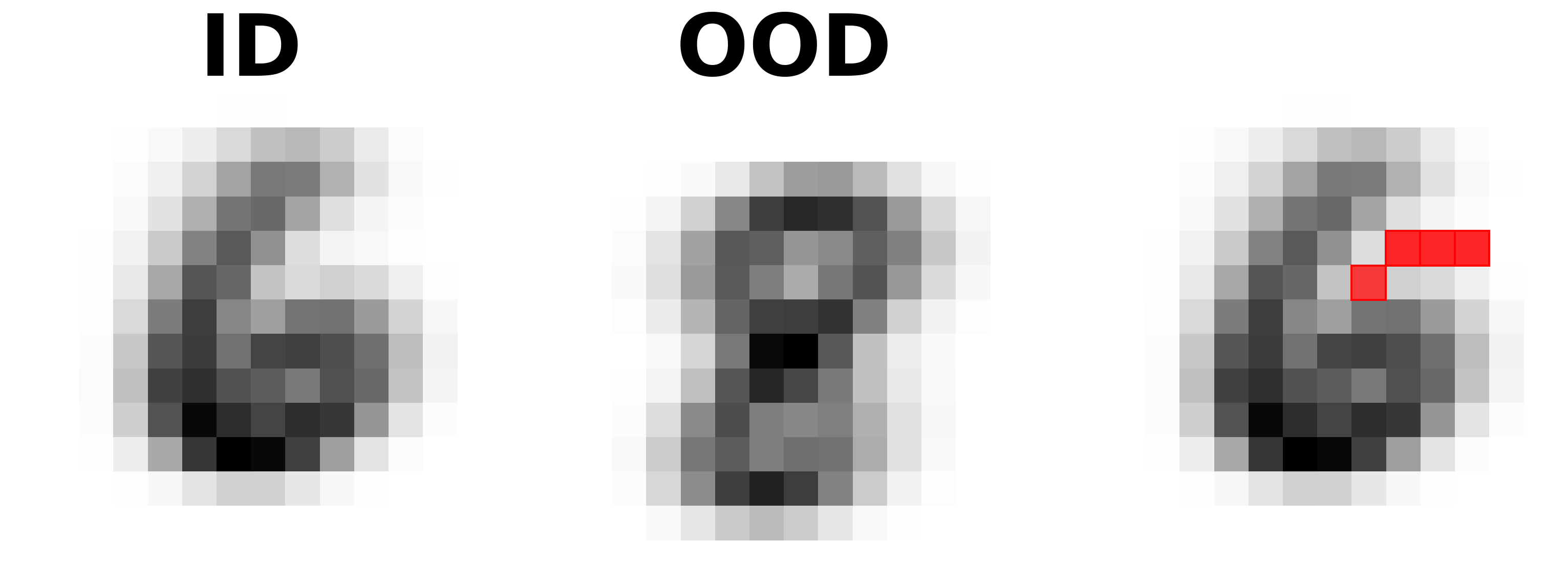}
\hspace{0.03\columnwidth}
\includegraphics[width=0.45\columnwidth]{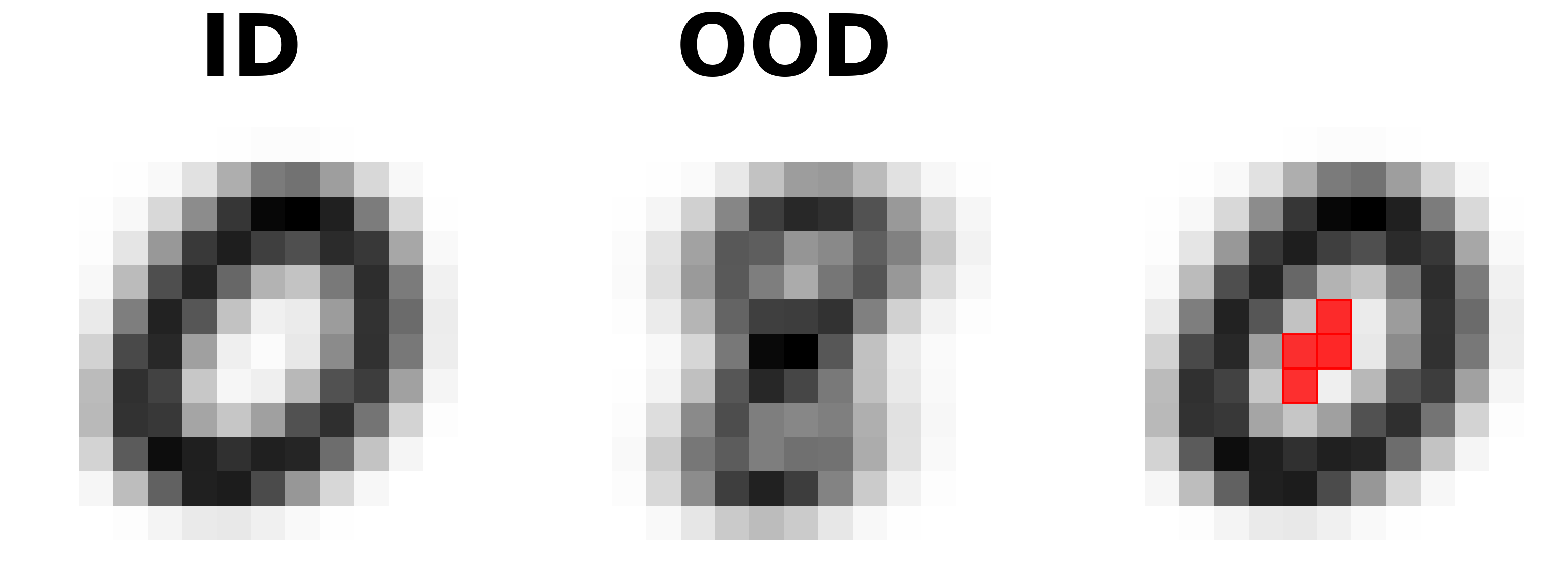}\\
\caption{ Each panel shows, for one ID--OOD digit pair, the top-ranked nodes scopes (highlighted) based on the HLD.}
\label{fig:comparison}
\end{figure}

\section{Conclusion, Limitations and Future Work}

We introduce the Hierarchical Likelihood Vector (HLV), which
represents an instance through likelihoods evaluated at selected
nodes of a PC and the Hierarchical Likelihood Distance (HLD), a PC-induced pseudo-metric between distributions based on their expected HLVs. We show that HLD is an Integral Probability Metric over a function class induced by the circuit and use it to construct a batch-level goodness-of-fit test for distributional OOD detection. For smooth, structured-decomposable PCs, the population mean and covariance of the HLV is computed exactly from the PC, enabling an approximate closed-form decision threshold without held-out ID reference or calibration data.

Across five tabular datasets and binarized MNIST (smaller resolution), HLD achieve high detection power at small and moderate batch sizes and exhibits lower variability across distribution shifts than the baselines. The false-positive rate on held-out ID data also increases more slowly at large batch sizes, indicating greater robustness to model misspecification in the evaluated settings. The node-type ablation shows that combining leaf and sum nodes provides the most reliable HLV representation, while the node-wise decomposition of the test statistic provides qualitative localization of the variables associated with distribution shifts.

The method nevertheless has two limitations. First, the statistical calibration is defined with respect to the distribution represented by the trained PC; false positives on real ID data therefore depend on the fidelity of the learned model (as illustrated by the results on full-resolution MNIST). Second, the closed-form threshold relies on a two-moment Gaussian approximation to a generalized chi-square distribution and may be less accurate when a small number of eigenvalues dominate the covariance spectrum. The generalized chi-square quantile can instead be computed numerically from the exactly obtained covariance matrix. Extending HLD to continuous input distributions, for which node densities need not be bounded, remains an important direction for future work.


\bibliography{aaai2027}

\appendix

\appendix
\onecolumn

\noindent \hrulefill
\section*{A Probabilistic Circuit-Induced Pseudo-Metric for Out-of-Distribution Detection}

\noindent \hrulefill
\vspace{1em}

\section{HLD as an IPM and Connection to Other Divergence Measures}
Integral Probability Metrics (IPMs) provide a general framework for measuring
discrepancies between probability distributions through expectations over a
prescribed class of test functions. Given a function class $\mathcal{F}$, the IPM
between two probability distributions $P$ and $Q$ is defined as \cite{mul97, sri12}
\[
\mathrm{IPM}_{\mathcal{F}}(P,Q)=\sup_{f\in\mathcal{F}}\left|\mathbb{E}_{P}[f(\mathbf{X})]-\mathbb{E}_{Q}[f(\mathbf{X})]\right|.
\]
Intuitively, an IPM measures the largest discrepancy in expectation that can be
achieved by any function in the class $\mathcal{F}$. Consequently, the choice of
$\mathcal{F}$ determines the discriminative power and properties of the resulting
statistical discrepancy. For example, the Maximum Mean Discrepancy (MMD)
\cite{gret12} is obtained when $\mathcal{F}$ is the unit ball of a reproducing
kernel Hilbert space, the Wasserstein-1 distance corresponds to the class of
1-Lipschitz functions \cite{vil09}, and the Total Variation distance is induced
by the class of bounded functions with $\|f\|_{\infty}\le 1$ \cite{sri12}.

Our proposed HLD compares two probability distributions through the expected
likelihood-vector representation induced by a probabilistic circuit. Like
classical measures such as the Kullback--Leibler (KL) divergence \cite{tsy09},
Total Variation (TV) distance \cite{tsy09}, Wasserstein distance \cite{vil09},
and Maximum Mean Discrepancy (MMD) \cite{gret12}, it quantifies the discrepancy
between two distributions, but through a different representation of the
underlying data.

\paragraph{HLD as an IPM.}
The HLD can itself be viewed through the IPM lens. Each coordinate of the
likelihood vector, $\ell_n(\mathbf{x})=p_n(\mathbf{x}_{S_n})$, is the probability
assigned to $\mathbf{x}_{S_n}$ by the subcircuit rooted at node $n$, and therefore
$\ell_n(\mathbf{x})\in[0,1]$. Writing $\boldsymbol{\mu}_P=\mathbb{E}_{P}[\mathbf{L}(\mathbf{X})]$
and $\boldsymbol{\mu}_Q=\mathbb{E}_{Q}[\mathbf{L}(\mathbf{X})]$, the HLD is the
$\ell_2$ distance between these two mean embeddings,
\[
d_{\mathcal{C}}(P,Q)=\|\boldsymbol{\mu}_P-\boldsymbol{\mu}_Q\|_2
=\Big(\textstyle\sum_{n=1}^{m}\big(\mathbb{E}_{P}[\ell_n]-\mathbb{E}_{Q}[\ell_n]\big)^2\Big)^{1/2},
\]
so each coordinate is an IPM with the single test function $\ell_n$, and the HLD
aggregates these coordinate-wise discrepancies over the $m$ selected nodes. In
this sense the HLD is a finite-dimensional, circuit-induced embedding
discrepancy: rather than optimizing over a function class, it fixes the test
functions to the node-level likelihood responses that the PC already computes.

\paragraph{Connection to total variation and KL.}
Because each test function is bounded, $\ell_n\in[0,1]$, every coordinate is
controlled by the total-variation distance. Using
$\mathrm{TV}(P,Q)=\sup_{0\le f\le 1}\left|\mathbb{E}_{P}[f]-\mathbb{E}_{Q}[f]\right|$,
we have $\big|\mathbb{E}_{P}[\ell_n]-\mathbb{E}_{Q}[\ell_n]\big|\le \mathrm{TV}(P,Q)$
for every node $n$. Summing the squared coordinates over the $m$ nodes gives
\[
d_{\mathcal{C}}(P,Q)^2=\sum_{n=1}^{m}\big(\mathbb{E}_{P}[\ell_n]-\mathbb{E}_{Q}[\ell_n]\big)^2
\;\le\; m\,\mathrm{TV}(P,Q)^2 .
\]
Combining this with Pinsker's inequality \cite{tsy09},
$\mathrm{TV}(P,Q)^2\le \tfrac{1}{2}\,\mathrm{KL}(P\|Q)$, yields
\[
d_{\mathcal{C}}(P,Q)^2\;\le\;\frac{m}{2}\,\mathrm{KL}(P\|Q),
\qquad\text{equivalently}\qquad
d_{\mathcal{C}}(P,Q)\;\le\;\sqrt{\tfrac{m}{2}\,\mathrm{KL}(P\|Q)} .
\]
These bounds establish a theoretical connection between the proposed divergence
and the classical statistical divergences. In particular, distributions that are
close in total variation or KL divergence are also close in the likelihood-vector
representation induced by the PC.

\section{Scope Trichotomy}
\begin{lemma}[Scope Trichotomy]
Let \(\mathcal C\) be a structured-decomposable and smooth PC with respect to a v-tree \(\mathcal T\). For any two nodes \(n_i,n_j \in \mathcal C\) with scopes \(S_i, S_j\) exactly one of the following holds: \(S_i\cap S_j =\emptyset \) or \(S_i=S_j\) or one of \(S_i, S_j\) strictly contains the other.    
\label{lemma:scope}
\end{lemma}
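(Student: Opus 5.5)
The idea is to transfer the laminar structure of the v-tree's leaf sets to the PC's node scopes. Write $\mathrm{lv}(v)\subseteq\mathcal X$ for the set of variables at the leaves below a v-tree node $v$. We use the standard meaning of structured decomposability with respect to $\mathcal T$: every product node $n$ with $\mathbf x_{S_n}$ split into two children respects some v-tree node $v$, in the sense that $S_n=\mathrm{lv}(v)$ and the children's scopes are $\mathrm{lv}(v_L)$ and $\mathrm{lv}(v_R)$. If products have more than two children, we binarize them (or view the v-tree node as a partition) without changing any scopes.

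\textbf{Step 1: laminarity of the v-tree.} For any two v-tree nodes $u,v$, either one is an ancestor of the other, in which case $\mathrm{lv}(u)\subseteq \mathrm{lv}(v)$ or the reverse, or neither is. In the latter case their subtrees are disjoint, so the leaf sets are disjoint because each variable labels exactly one leaf.

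\textbf{Step 2: every scope is a v-tree leaf set.} We show by structural induction, bottom-up, that every node scope $S_n$ equals $\mathrm{lv}(v)$ for some $v\in\mathcal T$.
\begin{itemize}
\item An input node has scope $\{X_k\}$, which is the leaf set of the v-tree leaf labelled $X_k$.
\item A product node has, by structured decomposability, scope equal to $\mathrm{lv}(v)$ for the v-tree node it respects.
\item A sum node has, by smoothness, all children with the same scope $S_c$. Hence $S_n=S_c$, and by induction $S_c$ is a leaf set.
\end{itemize}

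\textbf{Step 3: conclude.} Steps 1 and 2 together show that $\{S_n\}$ is a subfamily of a laminar family. Therefore any $S_i,S_j$ are disjoint, equal, or strictly nested. The three cases are mutually exclusive because scopes are nonempty. Equal sets are not disjoint, and strict containment excludes equality, so exactly one case holds.

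The main obstacle is Step 2 for product nodes. The paper's informal definition only says that products ``conform to a common v-tree''. The whole argument rests on making this precise as the statement that each product's scope is exactly a v-tree leaf set and its children split that set along the v-tree's children. It is not enough for the children's scopes merely to be unions that are compatible with the v-tree. I would state this convention explicitly, following \cite{ver21}, before running the induction.
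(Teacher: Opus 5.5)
Your proposal is correct and follows essentially the same route as the paper: both identify every node scope with the leaf set of a v-tree node, read off disjointness, equality or strict nesting from the ancestor/incomparable structure of the v-tree, and use nonemptiness of scopes for exclusivity. Your Step 2 supplies the structural induction that the paper only asserts (that smoothness and structured decomposability force each scope to be some $\phi(t)$), and the weaker ``children's scopes contained in the left/right leaf sets'' definition does not actually defeat the lemma, only your bottom-up proof of it: a top-down induction from the root, whose scope is all of $\mathcal X$, still forces every scope to be a v-tree leaf set.
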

\begin{proof}
For a v-tree node $t$ let $\phi(t)$ be its scope, i.e.\ the variables at the leaves
below $t$. Two facts about a v-tree suffice:
(i) every $\phi(t)$ is nonempty; and
(ii) the two children of an internal node have nonempty, disjoint scopes whose
union is $\phi(t)$.

Because $\mathcal{C}$ is smooth and structured-decomposable with respect to $T$, each node's scope equals $\phi(t)$ for some vtree node $t$; let $t_i,t_j$ be the v-tree nodes with $S_i=\phi(t_i)$, $S_j=\phi(t_j)$. It is enough to relate $\phi(t_i)$ and $\phi(t_j)$ and in a rooted tree $t_i,t_j$ are either equal, in an ancestor--descendant relation, or incomparable.

\emph{Equal:} $t_i=t_j$ gives $S_i=S_j$.

\emph{Ancestor--descendant:} say $t_i$ is a proper ancestor of $t_j$. Then
$\phi(t_j)\subseteq\phi(t_i)$ and taking the child $c$ of $t_i$ that does
\emph{not} contain $t_j$, fact (ii) gives a variable in $\phi(c)\subseteq S_i$
that lies outside $S_j$; hence $S_j\subsetneq S_i$ (strict containment).

\emph{Incomparable:} let $u$ be their least common ancestor. Then $t_i,t_j$ sit
below different children of $u$, whose scopes are disjoint by (ii); since
$S_i,S_j$ are contained in these two scopes, $S_i\cap S_j=\varnothing$.

These three tree relations are mutually exclusive and exhaustive, giving exactly
one of $S_i=S_j$, strict containment, or $S_i\cap S_j=\varnothing$. (They cannot
coincide: by (i) the scopes are nonempty, so equality and strict containment both
exclude disjointness.)
\end{proof}

\section{Experiments}

\subsection{Datasets}
\subsubsection{UCI tabular}We evaluate on five UCI classification datasets: Adult, connect-4, Covertype, Census-KDD and Sensorless Drive Diagnosis \cite{bec96,bla98,tro95,cen00,bat13}. For each experiment, one class is treated as the in-
distribution, while each remaining class is considered as a
separate OOD distribution. This produces multiple ID–OOD
pairs for each dataset as listed in Table \ref{tab:uci-composition}. we discretize the continuous columns. Each ID class pool is split $50/50$ into training set A and a held-out set B, under the deployment protocol the trained circuit is the only in-distribution representation, so B is never used for training, threshold calibration or as a reference, it serves solely as a real-ID sanity check. 
\begin{table}[!ht]
\centering
\begin{tabular}{lllrrrrr}
\toprule
Dataset & ID class & OOD class & $d$ & $N_{\text{ID}}$ & $N_A$ & $N_B$ & $T$ \\
\midrule
Adult       & $\leq$50K & $>$50K & 14 & 37{,}155 & 18{,}577 & 18{,}578 & 2{,}500 \\
Adult       & $>$50K & $\leq$50K & 14 & 11{,}687 & 5{,}843 & 5{,}844 & 2{,}500 \\
\midrule
Connect-4   & win & loss & 42 & 44{,}473 & 22{,}236 & 22{,}237 & 2{,}500 \\
Connect-4   & win & draw & 42 & 44{,}473 & 22{,}236 & 22{,}237 & 2{,}500 \\
Connect-4   & loss & win & 42 & 16{,}635 & 8{,}317 & 8{,}318 & 2{,}500 \\
\midrule
Covertype   & lodgepole & aspen & 54 & 283{,}301 & 141{,}650 & 141{,}651 & 2{,}500 \\
Covertype   & lodgepole & spruce/fir & 54 & 283{,}301 & 141{,}650 & 141{,}651 & 2{,}500 \\
Covertype   & ponderosa & cottonwood & 54 & 35{,}754 & 17{,}877 & 17{,}877 & 2{,}500 \\
Covertype   & ponderosa & Douglas fir & 54 & 35{,}754 & 17{,}877 & 17{,}877 & 2{,}500 \\
\midrule
Census-KDD  & $<$50K & $\geq$50K & 40 & 280{,}717 & 140{,}358 & 140{,}359 & 2{,}500 \\
Census-KDD  & $\geq$50K & $<$50K & 40 & 18{,}568 & 9{,}284 & 9{,}284 & 2{,}500 \\
\midrule
Sensorless  & healthy & fault-02 & 48 & 5{,}319 & 2{,}659 & 2{,}660 & 2{,}500 \\
Sensorless  & healthy & fault-03 & 48 & 5{,}319 & 2{,}659 & 2{,}660 & 2{,}500 \\
Sensorless  & healthy & fault-04 & 48 & 5{,}319 & 2{,}659 & 2{,}660 & 2{,}500 \\
Sensorless  & healthy & fault-05 & 48 & 5{,}319 & 2{,}659 & 2{,}660 & 2{,}500 \\
Sensorless  & healthy & fault-06 & 48 & 5{,}319 & 2{,}659 & 2{,}660 & 2{,}500 \\
Sensorless  & healthy & fault-07 & 48 & 5{,}319 & 2{,}659 & 2{,}660 & 2{,}500 \\
Sensorless  & healthy & fault-08 & 48 & 5{,}319 & 2{,}659 & 2{,}660 & 2{,}500 \\
Sensorless  & healthy & fault-09 & 48 & 5{,}319 & 2{,}659 & 2{,}660 & 2{,}500 \\
Sensorless  & healthy & fault-10 & 48 & 5{,}319 & 2{,}659 & 2{,}660 & 2{,}500 \\
Sensorless  & healthy & fault-11 & 48 & 5{,}319 & 2{,}659 & 2{,}660 & 2{,}500 \\
\bottomrule
\end{tabular}
\caption{UCI dataset ID--OOD pair.}
\label{tab:uci-composition}
\end{table}
\subsubsection{Image} For MNIST we use all $90$ ordered digit pairs, treating each digit in turn as the in-distribution (ID) class and each of the other nine as an out-of-distribution (OOD). Images are trained in $28\times28$ resolution and also by downscaling to $7\times7$, a coarser resolution by average-pooling, flattened to a vector of pixels and discretized.\\

\subsection{Training} For every ID-OOD pair we train a Hidden Chow-Liu tree (HCLT) with four latent states per variable over the ID training split, optimized by EM for 100 epochs. We stop at 100 epochs because the training log-likelihood plateaus well before this point (Figure \ref{fig:epoch}), further epochs yield no significant improvement.

The HCLT does not learn the full-resolution image distribution well. Figure \ref{fig:MD} shows this across several digits, at the native $28{\times}28$ resolution, samples from the trained circuit lose its fine stroke topology, appearing fragmented and noisy relative to real examples. This weak fit at full resolution is the source of the elevated FPR observed at $28{\times}28$.
\begin{figure}
\centering
\includegraphics[width=0.5\columnwidth]{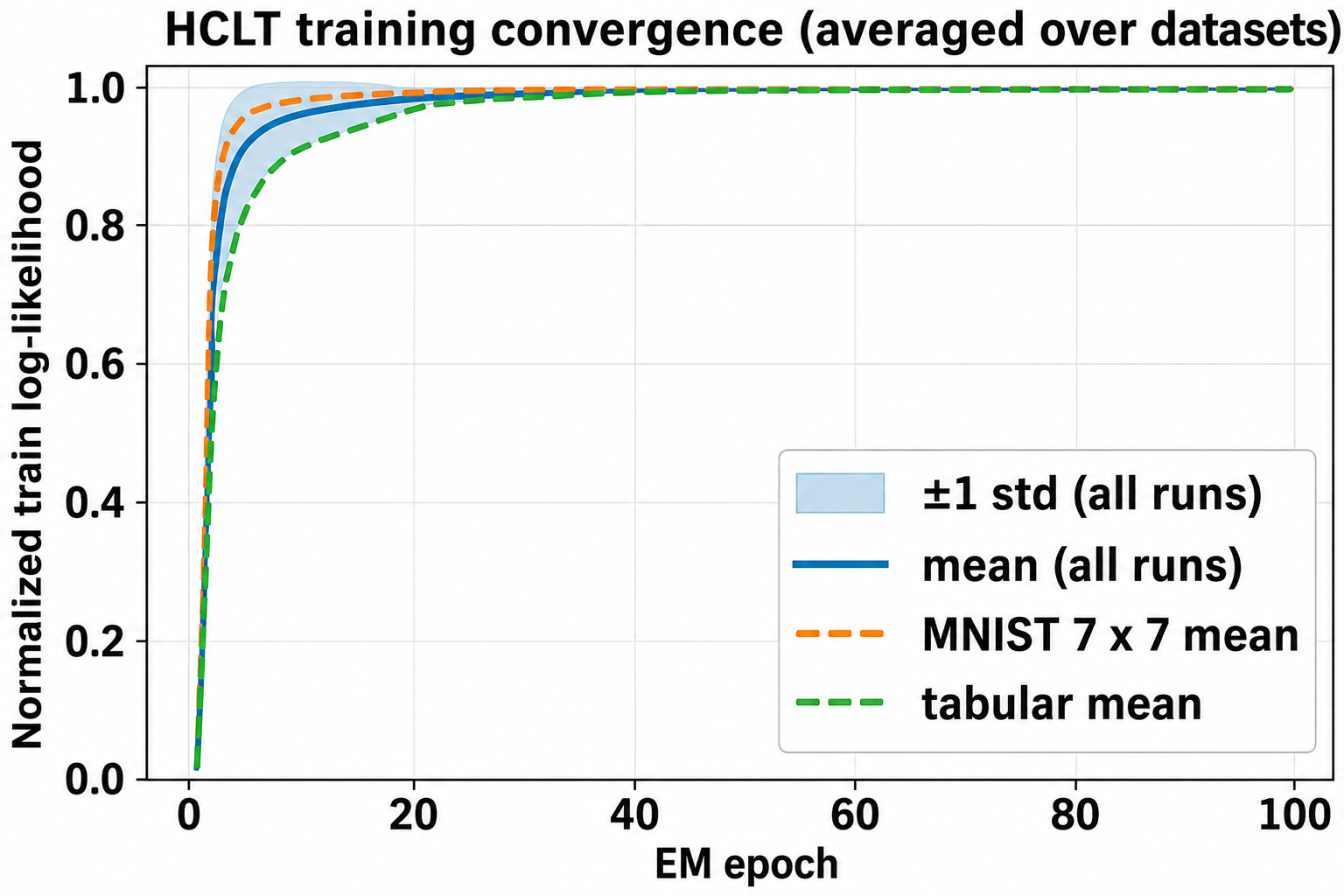}
\caption{HCLT training convergence: normalized training log-likelihood versus
EM epoch.}
\label{fig:epoch}
\end{figure}

\begin{figure}
\centering
\includegraphics[width=0.49\columnwidth]{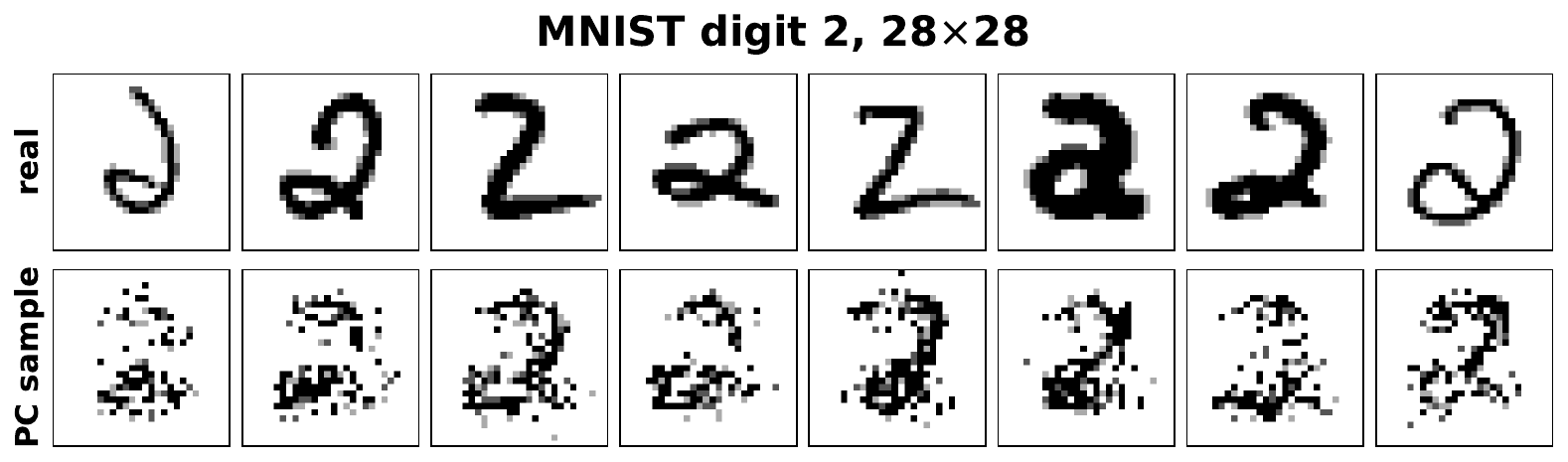}
\hfill
\includegraphics[width=0.49\columnwidth]{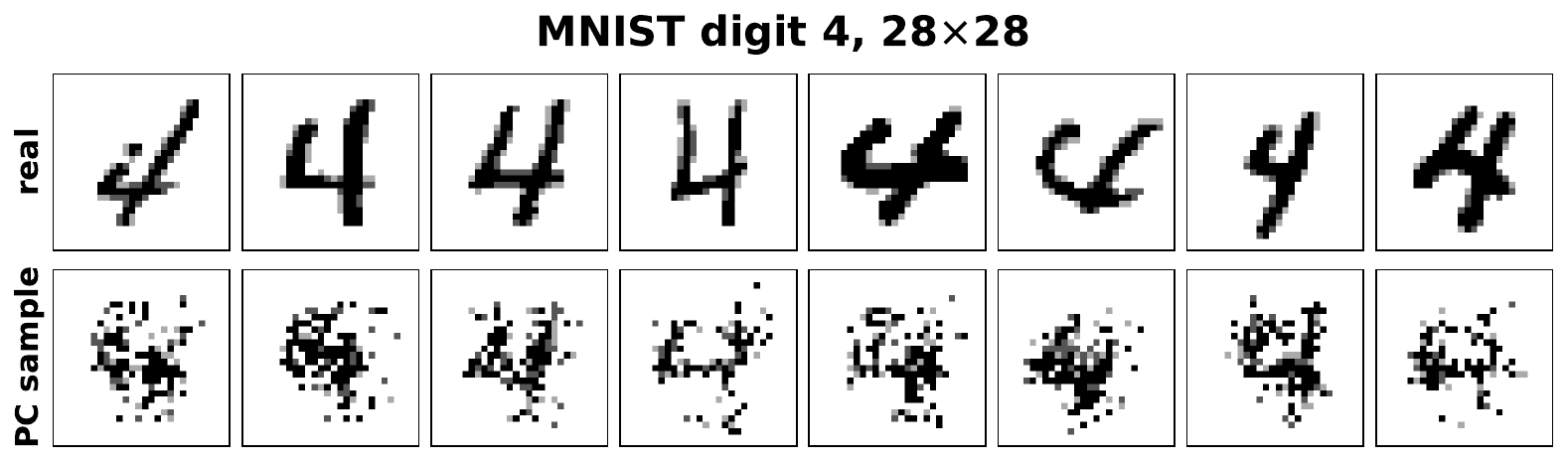}\\
\includegraphics[width=0.49\columnwidth]{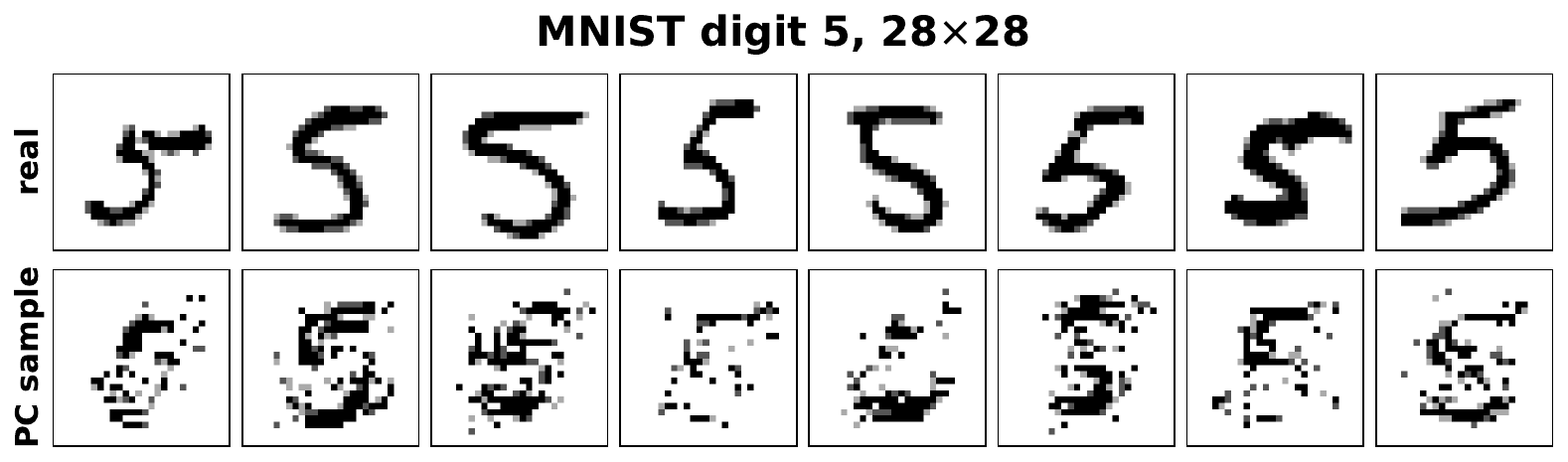}
\hfill
\includegraphics[width=0.49\columnwidth]{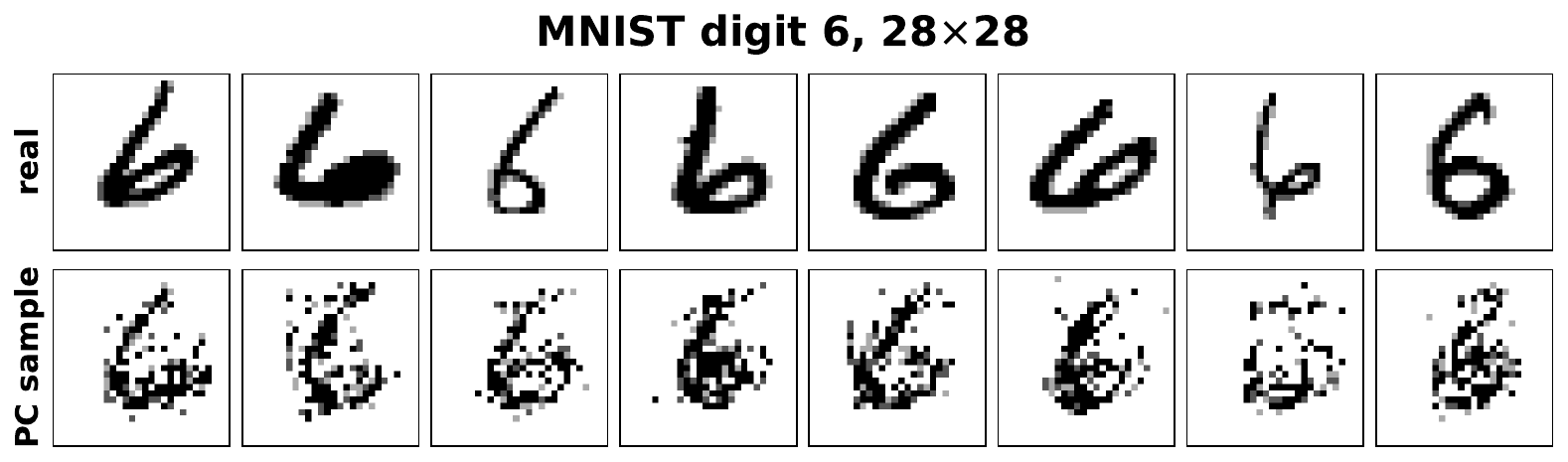}\\
\includegraphics[width=0.49\columnwidth]{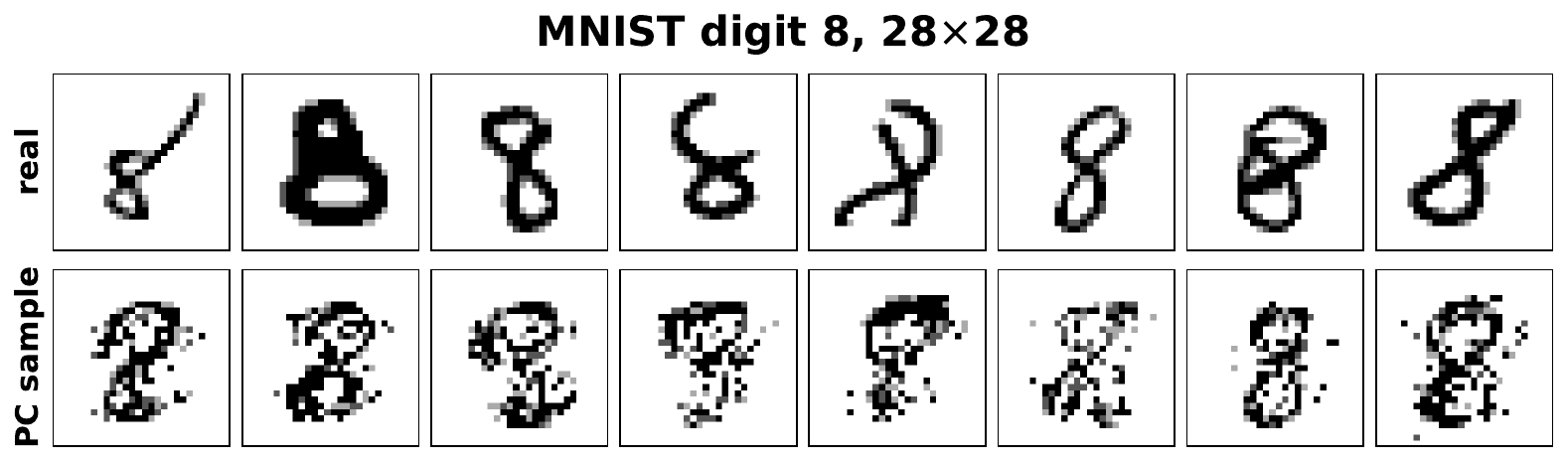}
\hfill
\includegraphics[width=0.49\columnwidth]{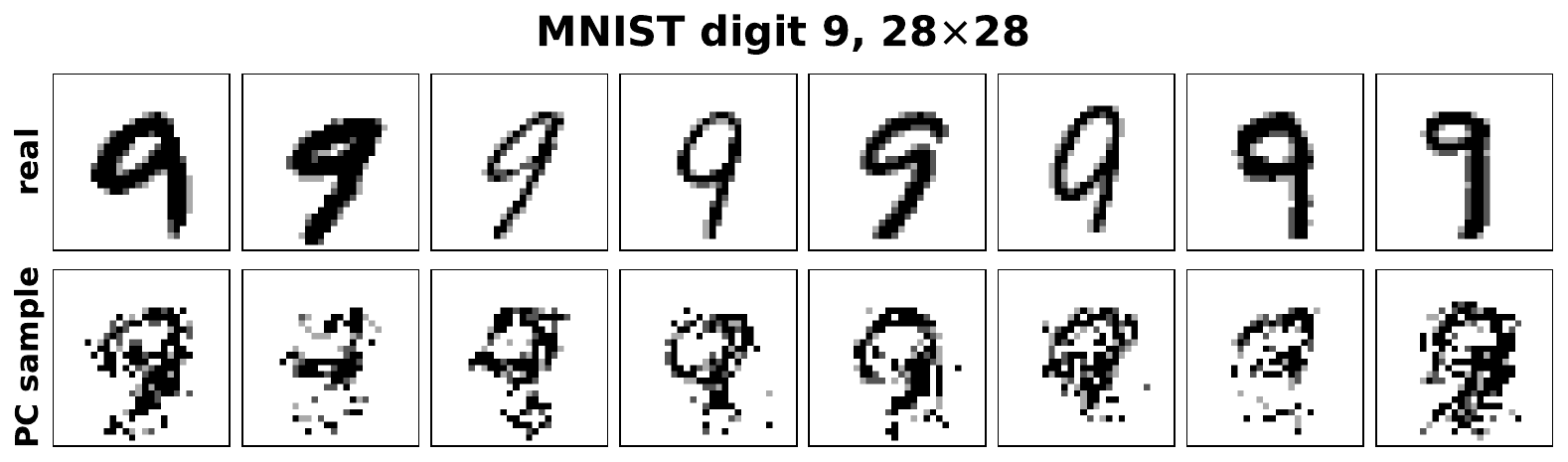}

\caption{Real MNIST digits (top) versus HCLT samples (bottom) at
$28{\times}28$ resolution. }
\label{fig:MD}
\end{figure}

\subsection{Baselines}
For a fair comparison, we cast every baseline as a batch level two sample test evaluated at the same batch size $T$ under the same calibration. We compare:
\begin{itemize}[wide, labelindent=0pt]
    \item \textbf{Typicality}: The absolute deviation of the test batch's mean negative log likelihood from the model's empirical entropy $\hat H$. For a test batch $B=\{x_1,\ldots,x_n\}$, the statistic is \cite{nal19d} \[T_{\mathrm{typ}}=\left|-\frac{1}{n}\sum_{i=1}^{n}\log p(x_i)-\hat{H}\right|.\]
    \item \textbf{Root LL}: The difference between the training heldout average and the test batch average negative log-likehood \cite{nal19},
    \[T_{\mathrm{root}}=\left|\frac{1}{n}\sum_{\mathbf x\in ID}\log p(\mathbf x)-\frac{1}{n}\sum_{\mathbf x\in test}\log p(\mathbf x)\right|.\]
    \item \textbf{TDI}: For every sample, we get a uncertainty score propagating dropout analytically through the PC. Let $u(\mathbf x)$ denote the resulting scalar TDI score. The batch statistic is\cite{ven23}:
    \[T_{TDI}=\left| \frac{1}{n}\sum_{\mathbf x \in ID}u(\mathbf x)-\frac{1}{n}\sum_{\mathbf x\in test} u(\mathbf x)\right|.\]
    \item \textbf{MMD}:Using a RBF kernel,
    \[k(\mathbf x, \mathbf y)=\exp\left(-\frac{\|\mathbf x-\mathbf y\|^2}{2\sigma^2}\right),\]
    where the $\sigma$ is selected using the median heuristic on the ID holdout set, the empirical statistic is \cite{gret12},
    \[MMD^2(B_1,B_2)=\frac{1}{n^2}\sum_{i,j}k(x_i,x_j)+\frac{1}{m^2}\sum_{i,j}k(y_i,y_j)-\frac{2}{nm}\sum_{i,j}k(x_i,y_j),\]
    where $B_1$ and $B_2$ denote the ID and test batches.
    \item \textbf{HLD}: The empirical HLD between the learned distribution $P$ and the test distribution $Q$ is then
\[
\hat{d}_{\mathrm{C}}(P,Q)=\|\boldsymbol{\mu}_P-\boldsymbol{\hat{\mu}}_Q\|_2,
\]
where $\boldsymbol{\mu}_P=\mathbb{E}_{\textbf{X}\sim P}[\textbf{L(X)}]$ denotes the population HLV under the distribution represented by the trained PC.
\end{itemize}
\subsection{HLD Algorithm}
Algorithms~\ref{alg:hld-offline} and~\ref{alg:hld-test} provide the pseudo-code for the goodness-of-fit test using HLD. Algorithm~\ref{alg:hld-offline} is \emph{offline}: the population mean $\boldsymbol{\mu}_P$, covariance $\boldsymbol{\Sigma}_P$, and the traces $\tr(\boldsymbol{\Sigma}_P)$, $\tr(\boldsymbol{\Sigma}_P^2)$ are computed once after the PC is trained. Algorithm~\ref{alg:hld-test} runs \emph{per test batch} at deployment: the
threshold $\tau$ depends on the batch size $T$ and is therefore evaluated at test
time from the precomputed traces, after which the HLD statistic is compared against it.
\begin{algorithm}
\caption{HLD offline pre-computation (after training)}
\label{alg:hld-offline}
\textbf{Input}: structured-decomposable PC $\mathcal{C}$ over $\mathbf{X}$ inducing
v-tree $\mathcal{T}$ (the sole model of the in-distribution $P$); selected nodes
$n_1,\dots,n_m$ with scopes $S_1,\dots,S_m$\\
\textbf{Output}: mean $\boldsymbol{\mu}_P$, covariance $\boldsymbol{\Sigma}_P$,
traces $\tr(\boldsymbol{\Sigma}_P)$, $\tr(\boldsymbol{\Sigma}_P^2)$
\begin{algorithmic}[1]
\STATE \textit{// Phase 1: exact population mean $\boldsymbol{\mu}_P$}
\FOR{$i = 1$ \TO $m$}
    \STATE marginalize $\mathcal{C}$ onto $S_i$ to obtain $p_{S_i}$
    \STATE $\mu_{P i} \gets \sum_{\mathbf{x}_{S_i}} \big(p_{S_i}\otimes p_{n_i}\big)(\mathbf{x}_{S_i})$
\ENDFOR
\STATE \textit{// Phase 2: exact covariance $\boldsymbol{\Sigma}_P$}
\FOR{$i = 1$ \TO $m$}
    \FOR{$j = i$ \TO $m$}
        \STATE $S_{ij} \gets$ scope of the LCA of the v-tree nodes for $S_i,S_j$
        \STATE marginalize $\mathcal{C}$ onto $S_{ij}$ to obtain $p_{S_{ij}}$
        \STATE $E_{ij} \gets \sum_{\mathbf{x}_{S_{ij}}}\big(p_{S_{ij}}\otimes p_{n_i}\otimes p_{n_j}\big)(\mathbf{x}_{S_{ij}})$
        \STATE $(\boldsymbol{\Sigma}_P)_{ij} \gets E_{ij}-\mu_{P i}\,\mu_{P j}$
        \STATE $(\boldsymbol{\Sigma}_P)_{ji} \gets (\boldsymbol{\Sigma}_P)_{ij}$
    \ENDFOR
\ENDFOR
\STATE $\tr(\boldsymbol{\Sigma}_P)\gets\sum_i(\boldsymbol{\Sigma}_P)_{ii}$;\quad
       $\tr(\boldsymbol{\Sigma}_P^2)\gets\sum_{i,j}(\boldsymbol{\Sigma}_P)_{ij}^2$
\STATE \textbf{return} $\boldsymbol{\mu}_P,\ \boldsymbol{\Sigma}_P,\
       \tr(\boldsymbol{\Sigma}_P),\ \tr(\boldsymbol{\Sigma}_P^2)$
\end{algorithmic}
\end{algorithm}
\begin{algorithm}
\caption{HLD batch-level OOD test (deployment)}
\label{alg:hld-test}
\textbf{Input}: selected nodes $n_1,\dots,n_m$ with scopes $S_1,\dots,S_m$;
precomputed $\boldsymbol{\mu}_P$, $\tr(\boldsymbol{\Sigma}_P)$,
$\tr(\boldsymbol{\Sigma}_P^2)$ (Algorithm~\ref{alg:hld-offline}); test batch
$\mathcal{B}=\{\mathbf{x}^{(1)},\dots,\mathbf{x}^{(T)}\}$\\
\textbf{Parameter}: batch size $T$; significance level $\alpha$\\
\textbf{Output}: decision $\textsc{OOD}$ or $\textsc{ID}$
\begin{algorithmic}[1]
\STATE \textit{// Phase 1: decision threshold from the null distribution}
\STATE $\tau \gets 
\sqrt{
\frac{\operatorname{tr}(\boldsymbol{\Sigma}_P)}{T}
+
\frac{z_{1-\alpha}}{T}
\sqrt{2\operatorname{tr}(\boldsymbol{\Sigma}_P^2)}
}$
\STATE \textit{// Phase 2: test the batch $\mathcal{B}$}
\FOR{$t = 1$ \TO $T$}
    \STATE $\mathbf{L}_\mathcal{C}(\mathbf{x}^{(t)}) \gets \big(p_{n_1}(\mathbf{x}^{(t)}_{S_1}),\dots,p_{n_m}(\mathbf{x}^{(t)}_{S_m})\big)$
\ENDFOR
\STATE $\boldsymbol{\hat{\mu}}_Q \gets \tfrac{1}{T}\sum_{t=1}^{T}\mathbf{L}_\mathcal{C}(\mathbf{x}^{(t)})$
\STATE $\Delta_T \gets \big\lVert \boldsymbol{\mu}_P - \boldsymbol{\hat{\mu}}_Q \big\rVert_2$ \COMMENT{HLD statistic $\widehat d_{\mathcal C}(P,Q)$}
\STATE \textbf{if} $\Delta_T > \tau$ \textbf{then return} $\textsc{OOD}$ \textbf{else return} $\textsc{ID}$
\end{algorithmic}
\end{algorithm}
\section{Ablations}

\subsection{Architecture}
We ablate on the capacity and architecture type of the PC. All runs use the same pipeline, the same batch test protocol on the adult dataset.

\paragraph{Capacity sweep (HCLT)}
We vary the number of latent states, $K\in{4,8,16}$, larger K yields a wider circuit and a longer likelihood vector. Table \ref{tab:arch-capacity} reports circuit size, training log-likelihood and detection rate. Increasing K improves the fit only marginally, log-likelihood rises from $-13.50(K=4)$ to $-13.08(K=16)$, a change too small to affect the HLV comparison and detection reflects this directly from $T=100$ onward, all three settings are saturated at essentially perfect detection, so added capacity yields no further discriminative benefit. The reading is that once additional latent size brings no new information as here, increasing $K$ will not help. Here $K=4$ suffices as detection is saturated.
\begin{table}[!ht]\centering
\begin{tabular}{lrrrrrr}
\toprule
$K$ & scalars & LL & $T{=}10$ & $T{=}50$ & $T{=}100$ & $T{=}200$  \\
\midrule
$4$ & $101$ & $-13.50$ & $0.104$ & $0.558$ & $1.000$ & $1.000$  \\
$8$ & $201$ & $-13.17$ & $0.080$ & $0.400$ & $0.998$ & $1.000$  \\
$16$ & $401$ & $-13.08$ & $0.054$ & $0.578$ & $0.998$ & $1.000$ \\
\bottomrule
\end{tabular}
\caption{HCLT capacity sweep. Circuit size and HLD detection rate, $\alpha{=}0.05$. }
\label{tab:arch-capacity}
\end{table}
\paragraph{Architecture Change - RAT-SPN}
We replace the HCLT with RAT-SPN, keeping the training and batch protocol identical. We sweep both dimensions of RAT-SPN capacity: the split depth $D\!\in\!\{2,3\}$ and the block size $K\!\in\!\{4,8\}$. Three things are visible from the table \ref{tab:arch-ratspn}. First, HLD attains near perfect detection rate under RAT-SPN as well. Second, Increasing depth from $D\!=\!2$ to $D\!=\!3$ improves detection rate uniformly and increasing the $K$ degrades the detection rate. Third, at matched parameter budget the, HCLT is more efficient, matches or exceeds every RAT-SPN configuration we tried. The Chow Liu tree used by HCLT gives HLV a structural inductive bias that RAT-SPN's random splits do not recover.
\begin{table}[!ht]\centering
\begin{tabular}{lrrrrrrr}
\toprule
 ($D,K$) & scalars & LL & $T{=}10$ & $T{=}50$ & $T{=}100$ & $T{=}200$  \\
\midrule
$(2,4)$ &  $109$ & $-14.88$ & $0.080$ & $0.294$ & $1.000$ & $1.000 $  \\
$(2,8)$ &  $217$ & $-14.80$ & $0.000$ & $0.222$ & $0.800$ & $0.984 $  \\
$(3,4)$ &  $173$ & $-14.70$ & $0.024$ & $0.684$ & $1.000$ & $1.000$  \\
$(3,8)$ & $345$ & $-14.49$ & $0.008$ & $0.392$ & $0.994$ & $1.000$  \\
\bottomrule
\end{tabular}
\caption{RAT-SPN sweep. Split depth $D$ and block size $K$; HLD detection rate, $\alpha{=}0.05$.}
\label{tab:arch-ratspn}
\end{table}
\subsection{Node Ablation}
The Tables \ref{tab:node-ablation-adult}, \ref{tab:node-ablation-connect4}, \ref{tab:node-ablation-censuskdd}, \ref{tab:node-ablation-covertype}, and \ref{tab:node-ablation-sensorless} show the detection rate on each tabular dataset based on the choice of nodes used in representing HLV. While the optimal node selection varies across the different datasets, we observe that choosing the sum and leaf nodes across all the datasets may be the best option.
\begin{table}[!ht]\centering
\begin{tabular}{lrrrrr}
\toprule
node set & $\dim$ & $T{=}1$ & $T{=}10$ & $T{=}50$ & $T{=}100$ \\
\midrule
leaf               &   $56$ & $\mathbf{0.157\pm0.043}$ & $0.891\pm0.085$ & $1.000\pm0.000$ & $1.000\pm0.000$\\
sum                &   $ 25$  & $ 0.061\pm0.031$  & $ 0.719\pm0.193$  & $ 1.000\pm0.000$  & $ 1.000\pm0.000$  \\
prod               &   $ 28$  & $ 0.094\pm0.004$  & $ 0.263\pm0.019$  & $ 1.000\pm0.000 $ & $ 1.000\pm0.000$  \\
leaf$+$sum         &   $ 81$  & $ 0.141\pm0.019$  & $ \mathbf{0.968\pm0.010}$  & $ 1.000\pm0.000$  & $ 1.000\pm0.000 $ \\
leaf$+$prod        &   $ 84$  & $ 0.076\pm0.034$ & $0.677\pm0.159$ & $1.000\pm0.000$ & $1.000\pm0.000$ \\
sum$+$prod         &   $53$ & $0.087\pm0.045$ & $0.396\pm0.018$ & $1.000\pm0.000$ & $1.000\pm0.000$ \\
leaf$+$sum$+$prod  &  $109$ & $0.126\pm0.010$ & $0.793\pm0.015$ & $1.000\pm0.000 $& $1.000\pm0.000 $\\
\bottomrule
\end{tabular}
\caption{Node-type ablation on \textbf{adult}.}
\label{tab:node-ablation-adult}
\end{table}

\begin{table}[!ht]\centering
\begin{tabular}{lrrrrr}
\toprule
node set & $\dim$ & $T{=}1$ & $T{=}10$ & $T{=}50$ & $T{=}100$ \\
\midrule
leaf               &  $168$ & $0.006\pm0.004$ & $\mathbf{0.113\pm0.010}$ & $0.695\pm0.261$ & $0.896\pm0.147$ \\
sum                &  $137$ & $\mathbf{0.021\pm0.009}$ & $0.047\pm0.059$ & $0.651\pm0.362$ & $\mathbf{0.973\pm0.038}$ \\
prod               &  $140$ & $0.008\pm0.004$ & $0.001\pm0.001$ & $0.151\pm0.213$ & $0.379\pm0.440$ \\
leaf$+$sum         &  $305$ & $0.007\pm0.005$ & $0.075\pm0.035$ & $\mathbf{0.763\pm0.290}$ & $0.955\pm0.064$ \\
leaf$+$prod        &  $308$ & $0.009\pm0.008$ & $0.012\pm0.011$ & $0.285\pm0.302$ & $0.703\pm0.263$ \\
sum$+$prod         &  $277$ & $0.018\pm0.005$ & $0.001\pm0.001$ & $0.067\pm0.092$ & $0.534\pm0.351$ \\
leaf$+$sum$+$prod  &  $445$ & $0.013\pm0.007$ & $0.003\pm0.003$ & $0.293\pm0.390$ & $0.850\pm0.186$ \\
\bottomrule
\end{tabular}
\caption{Node-type ablation on \textbf{connect4}.}
\label{tab:node-ablation-connect4}
\end{table}

\begin{table}[!ht]\centering
\begin{tabular}{lrrrrr}
\toprule
node set & $\dim$ & $T{=}1$ & $T{=}10$ & $T{=}50$ & $T{=}100$ \\
\midrule
leaf               &  $136$ & $\mathbf{0.056\pm0.028}$ & $\mathbf{0.453\pm0.007}$ & $1.000\pm0.000$ & $1.000\pm0.000$\\
sum                &   $57$ & $0.023\pm0.005$ & $0.068\pm0.048$ & $1.000\pm0.000 $& $1.000\pm0.000 $\\
prod               &   $60$ & $0.032\pm0.022$ & $0.024\pm0.012$ & $0.168\pm0.100$ & $0.721\pm0.302$ \\
leaf$+$sum         &  $193$ & $0.044\pm0.030$ & $0.441\pm0.239$ & $1.000\pm0.000$& $1.000\pm0.000$ \\
leaf$+$prod        &  $196$ & $0.027\pm0.043$ & $0.028\pm0.008$ & $0.582\pm0.018$ & $0.782\pm0.000$ \\
sum$+$prod         &  $117$ & $0.039\pm0.023$ & $0.038\pm0.010$ & $0.558\pm0.276$ & $1.000\pm0.001$ \\
leaf$+$sum$+$prod  &  $253$ & $0.049\pm0.023$ & $0.057\pm0.025$ & $0.998\pm0.002$ & $1.000\pm0.000$ \\
\bottomrule
\end{tabular}
\caption{Node-type ablation on \textbf{census-KDD}.}
\label{tab:node-ablation-censuskdd}
\end{table}

\begin{table}[!ht]\centering
\begin{tabular}{lrrrrr}
\toprule
node set & $\dim$ & $T{=}1$ & $T{=}10$ & $T{=}50$ & $T{=}100$ \\
\midrule
leaf               &   $40$ & $\mathbf{0.035\pm0.028}$ & $\mathbf{0.459\pm0.267}$ & $\mathbf{0.979\pm0.037}$ & $\mathbf{1.000\pm0.000}$ \\
sum                &   $25$ & $0.035\pm0.027$ & $0.041\pm0.042$ & $0.822\pm0.193$ & $0.973\pm0.048$ \\
prod               &   $28$ & $0.021\pm0.019$ & $0.014\pm0.011$ & $0.544\pm0.365$ & $0.902\pm0.170$ \\
leaf$+$sum         &   $65$ & $0.035\pm0.031$ & $0.237\pm0.114$ & $0.949\pm0.088$ & $0.999\pm0.002$ \\
leaf$+$prod        &   $68$ & $0.021\pm0.015$ & $0.025\pm0.011$ & $0.804\pm0.287$ & $0.987\pm0.025$ \\
sum$+$prod         &   $53$ & $0.028\pm0.022$ & $0.020\pm0.016$ & $0.652\pm0.327$ & $0.961\pm0.069$ \\
leaf$+$sum$+$prod  &   $93$ & $0.028\pm0.023$ & $0.029\pm0.021$ & $0.819\pm0.281$ & $0.988\pm0.021$ \\
\bottomrule
\end{tabular}
\caption{Node-type ablation on \textbf{covertype}.}
\label{tab:node-ablation-covertype}
\end{table}

\begin{table}[!ht]\centering
\begin{tabular}{lrrrrr}
\toprule
node set & $\dim$ & $T{=}1$ & $T{=}10$ & $T{=}50$ & $T{=}100$ \\
\midrule
leaf               &  $192$ & $\mathbf{0.092\pm0.039}$ & $\mathbf{0.968\pm0.056}$ & $1.000\pm0.000$ & $1.000\pm0.000$ \\
sum                &  $105$ & $0.022\pm0.018$ & $0.005\pm0.005$ & $0.000\pm0.000$ & $0.000\pm0.001$ \\
prod               &  $108$ & $0.019\pm0.012$ & $0.002\pm0.001$ & $0.001\pm0.001$ & $0.000\pm0.000$ \\
leaf$+$sum         &  $297$ & $0.021\pm0.014$ & $0.122\pm0.159$ & $0.508\pm0.492$ & $0.637\pm0.457$ \\
leaf$+$prod        &  $300$ & $0.019\pm0.012$ & $0.003\pm0.003$ & $0.004\pm0.006$ & $0.333\pm0.471$ \\
sum$+$prod         &  $213$ & $0.017\pm0.009$ & $0.006\pm0.006$ & $0.000\pm0.000$ & $0.000\pm0.000$ \\
leaf$+$sum$+$prod  &  $405$ & $0.019\pm0.012$ & $0.004\pm0.002$ & $0.000\pm0.000$ & $0.167\pm0.373$ \\
\bottomrule
\end{tabular}
\caption{Node-type ablation on \textbf{sensorless}.}
\label{tab:node-ablation-sensorless}
\end{table}

\subsection{$T$ and $\alpha$}

\paragraph{Effect of $\alpha$}
Tables \ref{tab:fpr1_a001}, \ref{tab:fpr2_a001}, \ref{tab:fpr1_a01}, and \ref{tab:fpr2_a01} show tightening the level (from $\alpha{=}0.1$ to $\alpha{=}0.01$ reduce the FPR uniformly, while loosening improves the FPR. The \(\mathrm{FPR}_{\mathrm{model}}\) tracks $\alpha$ closely, sitting near $0.10$ at $\alpha{=}0.1$ and near $0.01$ at $\alpha{=}0.01$, confirming all methods stay calibrated at either levels. \(\mathrm{FPR}_{\mathrm{data}}\) also shifts down under a tighter $\alpha$, at large $T$ values fall roughly by half (e.g HLD at $T{=}2500$ drops from $\approx 0.62$ at $\alpha=0.1$ to $\approx0.39$ at $\alpha{=}0.01$.

\paragraph{Effect of $T$}
Tables \ref{tab:fpr1_a001}, \ref{tab:fpr2_a001}, \ref{tab:fpr1_a01}, and \ref{tab:fpr2_a01} show \(\mathrm{FPR}_{\mathrm{model}}\) stays flat in $T$, pinned at the nominal level for every method. \(\mathrm{FPR}_{\mathrm{data}}\) instead climbs steadily as $T$ grows. Tightening $\alpha$ delays this climb but does not remove it, as the underlying model-data gap is unchanged.

HLD remains the best controlled of the five methods across both axes.
\begin{table}[!ht]
\centering
\begin{tabular}{rccccc}
\toprule
$T$ & HLD & MMD & RootLL & TDI & Typicality \\
\midrule
1 & $0.027 \pm 0.026$ & $0.004 \pm 0.003$ & $0.010 \pm 0.003$ & $0.011 \pm 0.004$ & $0.008 \pm 0.005$ \\
10 & $0.029 \pm 0.004$ & $0.010 \pm 0.003$ & $0.011 \pm 0.002$ & $0.011 \pm 0.001$ & $0.010 \pm 0.002$ \\
50 & $0.031 \pm 0.001$ & $0.010 \pm 0.001$ & $0.010 \pm 0.002$ & $0.009 \pm 0.002$ & $0.010 \pm 0.001$ \\
100 & $0.030 \pm 0.002$ & $0.009 \pm 0.001$ & $0.011 \pm 0.001$ & $0.010 \pm 0.001$ & $0.014 \pm 0.002$ \\
200 & $0.029 \pm 0.004$ & $0.011 \pm 0.002$ & $0.012 \pm 0.002$ & $0.011 \pm 0.002$ & $0.011 \pm 0.002$ \\
500 & $0.032 \pm 0.003$ & $0.014 \pm 0.005$ & $0.008 \pm 0.002$ & $0.009 \pm 0.002$ & $0.011 \pm 0.005$ \\
1000 & $0.029 \pm 0.005$ & $0.010 \pm 0.004$ & $0.012 \pm 0.002$ & $0.012 \pm 0.002$ & $0.010 \pm 0.004$ \\
1500 & $0.029 \pm 0.003$ & $0.009 \pm 0.003$ & $0.009 \pm 0.002$ & $0.008 \pm 0.002$ & $0.012 \pm 0.002$ \\
2000 & $0.030 \pm 0.004$ & $0.008 \pm 0.002$ & $0.014 \pm 0.003$ & $0.014 \pm 0.003$ & $0.011 \pm 0.005$ \\
2500 & $0.028 \pm 0.005$ & $0.011 \pm 0.001$ & $0.011 \pm 0.003$ & $0.012 \pm 0.004$ & $0.011 \pm 0.002$ \\
\bottomrule
\end{tabular}
\caption{\(\mathrm{FPR}_{\mathrm{data}}\) , averaged over 5 tabular datasets, $\alpha=0.01$.}
\label{tab:fpr1_a001}
\end{table}
\begin{table}[!ht]
\centering
\begin{tabular}{rccccc}
\toprule
$N$ & HLD & MMD & RootLL & TDI & Typicality \\
\midrule
1 & $0.015 \pm 0.020$ & $0.002 \pm 0.001$ & $0.008 \pm 0.005$ & $0.008 \pm 0.005$ & $0.007 \pm 0.006$ \\
10 & $0.027 \pm 0.004$ & $0.032 \pm 0.002$ & $0.024 \pm 0.008$ & $0.029 \pm 0.008$ & $0.021 \pm 0.007$ \\
50 & $0.027 \pm 0.002$ & $0.033 \pm 0.002$ & $0.040 \pm 0.005$ & $0.040 \pm 0.004$ & $0.031 \pm 0.010$ \\
100 & $0.031 \pm 0.006$ & $0.048 \pm 0.003$ & $0.045 \pm 0.009$ & $0.056 \pm 0.010$ & $0.042 \pm 0.025$ \\
200 & $0.039 \pm 0.015$ & $0.058 \pm 0.009$ & $0.056 \pm 0.016$ & $0.066 \pm 0.016$ & $0.052 \pm 0.026$ \\
500 & $0.061 \pm 0.032$ & $0.101 \pm 0.044$ & $0.098 \pm 0.043$ & $0.079 \pm 0.045$ & $0.123 \pm 0.104$ \\
1000 & $0.090 \pm 0.057$ & $0.217 \pm 0.125$ & $0.285 \pm 0.198$ & $0.278 \pm 0.197$ & $0.378 \pm 0.403$ \\
1500 & $0.192 \pm 0.108$ & $0.233 \pm 0.114$ & $0.294 \pm 0.312$ & $0.290 \pm 0.317$ & $0.445 \pm 0.420$ \\
2000 & $0.284 \pm 0.159$ & $0.352 \pm 0.099$ & $0.374 \pm 0.377$ & $0.366 \pm 0.384$ & $0.493 \pm 0.446$ \\
2500 & $0.394 \pm 0.210$ & $0.423 \pm 0.107$ & $0.423 \pm 0.404$ & $0.413 \pm 0.412$ & $0.523 \pm 0.462$ \\
\bottomrule
\end{tabular}
\caption{\(\mathrm{FPR}_{\mathrm{model}}\), averaged over 5 tabular datasets, $\alpha=0.01$.}
\label{tab:fpr2_a001}
\end{table}
\begin{table}[!ht]
\centering
\begin{tabular}{rccccc}
\toprule
$T$ & HLD & MMD & RootLL & TDI & Typicality \\
\midrule
1 & $0.111 \pm 0.062$ & $0.055 \pm 0.034$ & $0.093 \pm 0.009$ & $0.094 \pm 0.010$ & $0.100 \pm 0.006$ \\
10 & $0.096 \pm 0.001$ & $0.099 \pm 0.010$ & $0.097 \pm 0.010$ & $0.096 \pm 0.011$ & $0.100 \pm 0.010$ \\
50 & $0.102 \pm 0.010$ & $0.099 \pm 0.009$ & $0.095 \pm 0.011$ & $0.095 \pm 0.013$ & $0.098 \pm 0.017$ \\
100 & $0.104 \pm 0.003$ & $0.099 \pm 0.013$ & $0.101 \pm 0.009$ & $0.100 \pm 0.007$ & $0.103 \pm 0.009$ \\
200 & $0.102 \pm 0.014$ & $0.099 \pm 0.005$ & $0.098 \pm 0.006$ & $0.097 \pm 0.006$ & $0.093 \pm 0.006$ \\
500 & $0.106 \pm 0.008$ & $0.110 \pm 0.011$ & $0.102 \pm 0.008$ & $0.102 \pm 0.009$ & $0.101 \pm 0.004$ \\
1000 & $0.097 \pm 0.012$ & $0.099 \pm 0.008$ & $0.097 \pm 0.013$ & $0.095 \pm 0.012$ & $0.094 \pm 0.010$ \\
1500 & $0.102 \pm 0.009$ & $0.089 \pm 0.011$ & $0.105 \pm 0.008$ & $0.105 \pm 0.007$ & $0.101 \pm 0.008$ \\
2000 & $0.109 \pm 0.014$ & $0.098 \pm 0.012$ & $0.106 \pm 0.005$ & $0.106 \pm 0.004$ & $0.101 \pm 0.008$ \\
2500 & $0.102 \pm 0.009$ & $0.106 \pm 0.012$ & $0.104 \pm 0.015$ & $0.103 \pm 0.015$ & $0.101 \pm 0.011$ \\
\bottomrule
\end{tabular}
\caption{\(\mathrm{FPR}_{\mathrm{data}}\), averaged over 5 tabular datasets, $\alpha=0.1$.}
\label{tab:fpr1_a01}
\end{table}
\begin{table}[!ht]
\centering
\begin{tabular}{rccccc}
\toprule
$N$ & HLD & MMD & RootLL & TDI & Typicality \\
\midrule
1 & $0.078 \pm 0.046$ & $0.049 \pm 0.029$ & $0.080 \pm 0.034$ & $0.080 \pm 0.034$ & $0.075 \pm 0.047$ \\
10 & $0.089 \pm 0.007$ & $0.103 \pm 0.005$ & $0.093 \pm 0.035$ & $0.092 \pm 0.035$ & $0.086 \pm 0.053$ \\
50 & $0.097 \pm 0.010$ & $0.121 \pm 0.010$ & $0.100 \pm 0.019$ & $0.100 \pm 0.021$ & $0.089 \pm 0.057$ \\
100 & $0.128 \pm 0.018$ & $0.140 \pm 0.016$ & $0.102 \pm 0.031$ & $0.100 \pm 0.031$ & $0.104 \pm 0.079$ \\
200 & $0.138 \pm 0.026$ & $0.184 \pm 0.018$ & $0.171 \pm 0.054$ & $0.168 \pm 0.060$ & $0.217 \pm 0.109$ \\
500 & $0.189 \pm 0.046$ & $0.414 \pm 0.092$ & $0.292 \pm 0.189$ & $0.288 \pm 0.198$ & $0.451 \pm 0.354$ \\
1000 & $0.279 \pm 0.095$ & $0.661 \pm 0.119$ & $0.417 \pm 0.313$ & $0.406 \pm 0.323$ & $0.533 \pm 0.376$ \\
1500 & $0.418 \pm 0.169$ & $0.653 \pm 0.126$ & $0.501 \pm 0.361$ & $0.484 \pm 0.375$ & $0.582 \pm 0.390$ \\
2000 & $0.517 \pm 0.228$ & $0.643 \pm 0.116$ & $0.545 \pm 0.369$ & $0.517 \pm 0.396$ & $0.635 \pm 0.375$ \\
2500 & $0.621 \pm 0.268$ & $0.664 \pm 0.118$ & $0.673 \pm 0.381$ & $0.640 \pm 0.407$ & $0.659 \pm 0.374$ \\
\bottomrule
\end{tabular}
\caption{\(\mathrm{FPR}_{\mathrm{model}}\), averaged over 5 tabular datasets, $\alpha=0.1$.}
\label{tab:fpr2_a01}
\end{table}

\section{Results}

\subsection{Tabular}
Tables~\ref{tab:census_kdd},~\ref{tab:connect4},~\ref{tab:covertype}, and ~\ref{tab:sensorless} show the detection results across the various Tabular datasets. Across the datasets, Table~(a) reports the model-level false positive rate under the PC-vs-PC null, where all detectors are near-nominal throughout. Table~(b) replaces the second PC batch with real held-out ID data, exposing model misspecification: as $T$ grows, every detector's FPR inflates above $\alpha$ because the test becomes powerful enough to detect the residual gap between the circuit and the true data distribution. Crucially, HLD is by far the most robust detector in this regime, on Sensorless it holds the \(\mathrm{FPR}_{\mathrm{data}}\) down to $.351$ at $T=1000$, whereas MMD saturates completely at $1.00$ and typicality climbs to $.741$. This gap widens with batch size across every dataset, demonstrating that the hierarchical likelihood representation degrades gracefully where competing statistics collapse, and that HLD is the method to remain stable at large $T$ under PC-vs-real shift. Table~(c) shows detection rate against genuine OOD batches, where HLD is the strongest performer, reaching perfect or near-perfect detection by $T{=}50$ to $100$ on every dataset. Its advantage is sharpest exactly where the baselines fail: on Connect-4, RootLL, typicality, and TDI plateau near $.34$, while HLD drives detection to $1.00$.
\subsection{MNIST $28\times28$}
Table~\ref{tab:mnist} reports the full $28\times28$ resolution detection results discussed in the main paper. The elevated $\mathrm{FPR}_{\mathrm{data}}$ seen here is explained directly by the training analysis in Section C.2 as the circuit samples in Figure~\ref{fig:MD} show, the HCLT does not
learn the full-resolution pixel distribution well, and this poor fit is what
drives the false positives on real ID data at $28\times28$.

\subsection{Image}
Tables~\ref{tab:mnist-0-vs-1} to \ref{tab:mnist-8-vs-9}, give the full per-pair
breakdown of the averaged MNIST results reported in the main paper, confirming
that the aggregate trends hold pair-by-pair rather than being driven by a few
favorable pairs. Under the PC-vs-PC null ($\mathrm{FPR}_{\mathrm{model}}$),
all five methods are well calibrated across pairs. Detection power is similarly
uniform: although HLD is not the strongest detector at $T{=}1$, it is best or
tied-best on every pair from $T{=}10$ onward, reaching near-perfect detection. The methods separate most clearly on real held-out ID data: HLD attains
the lowest false-positive rate ($\mathrm{FPR}_{\mathrm{data}}$) on the large
majority of pairs at $T{=}1000$, and even its worst pair ($3$-vs-$7$) remains
comparable to the best-performing baseline, so the slow growth of
$\mathrm{FPR}_{\mathrm{data}}$ is systematic across the pair set rather than an
artifact of averaging.
\twocolumn
\begin{table}[t]
\centering
\small
\setlength{\tabcolsep}{1mm}

\caption{Detection results on Census-KDD , $\alpha=0.05$.}
\label{tab:census_kdd}
\end{table}

\begin{table}[t]
\centering
\small
\setlength{\tabcolsep}{1mm}
%
\caption{Detection results on Connect-4 , $\alpha=0.05$.}
\label{tab:connect4}
\end{table}

\begin{table}[t]
\centering
\small
\setlength{\tabcolsep}{1mm}
%
\caption{Detection results on Covertype , $\alpha=0.05$.}
\label{tab:covertype}
\end{table}

\begin{table}[t]
\centering
\small
\setlength{\tabcolsep}{1mm}
%
\caption{Detection results on Sensorless , $\alpha=0.05$.}
\label{tab:sensorless}
\end{table}
\begin{table}[t]
\centering
\small
\setlength{\tabcolsep}{1mm}
%
\caption{Detection results on MNIST ($28\times28$) $\alpha=0.05$.}
\label{tab:mnist}
\end{table}

\begin{table}[t]
\centering
\small
\setlength{\tabcolsep}{1mm}
%
\caption{Detection results on MNIST, pair 0-vs-1, $\alpha=0.05$.}
\label{tab:mnist-0-vs-1}
\end{table}

\begin{table}[t]
\centering
\small
\setlength{\tabcolsep}{1mm}
%
\caption{Detection results on MNIST, pair 0-vs-2, $\alpha=0.05$.}
\label{tab:mnist-0-vs-2}
\end{table}

\begin{table}[t]
\centering
\small
\setlength{\tabcolsep}{1mm}
%
\caption{Detection results on MNIST, pair 0-vs-3, $\alpha=0.05$.}
\label{tab:mnist-0-vs-3}
\end{table}

\begin{table}[t]
\centering
\small
\setlength{\tabcolsep}{1mm}
%
\caption{Detection results on MNIST, pair 8-vs-9, $\alpha=0.05$.}
\label{tab:mnist-8-vs-9}
\end{table}

\end{document}